\begin{document}
\title{Grammar Reinforcement Learning: path and cycle counting in graphs with a Context-Free Grammar and Transformer approach}
\date{}
\author[1]{Jason Piquenot}
\author[1]{Maxime B\'erar}
\author[1]{Pierre H\'eroux}
\author[2]{Jean-Yves Ramel}
\author[2]{Romain Raveaux}
\author[1]{S\'ebastien Adam}
\affil[1]{LITIS Lab, University of Rouen Normandy, France}
\affil[2]{LIFAT Lab, University of Tours, France}
\maketitle

\begin{abstract}

%This paper addresses the problem of efficiently counting paths and cycles in graphs, a key challenge in network analysis, computer science, biology, and social sciences. Existing matrix-based methods for path and cycle counting are efficient but computationally expensive. We propose a novel reinforcement learning algorithm, Grammar Reinforcement Learning (GRL), that leverages Monte Carlo Tree Search (MCTS) and a transformer architecture modeled as a Pushdown Automaton (PDA) within a context-free grammar (CFG) framework. GRL discovers new, more efficient formulas for substructure counting, improving computational efficiency by factors of two to six. Our contributions include: (i) a framework for generating transformers that operate within a CFG, (ii) the development of GRL for optimizing formulas within grammatical structures, and (iii) the discovery of novel formulas for graph substructure counting, leading to significant computational improvements.

This paper presents Grammar Reinforcement Learning (GRL), a reinforcement learning algorithm that uses Monte Carlo Tree Search (MCTS) and a transformer architecture that models a Pushdown Automaton (PDA) within a context-free grammar (CFG) framework. Taking as use case the problem of efficiently counting paths and cycles in graphs, a key challenge in network analysis, computer science, biology, and social sciences, GRL discovers new matrix-based formulas for path/cycle counting that improve computational efficiency by factors of two to six w.r.t state-of-the-art approaches. Our contributions include: (i) a framework for generating gramformers that operate within a CFG, (ii) the development of GRL for optimizing formulas within grammatical structures, and (iii) the discovery of novel formulas for graph substructure counting, leading to significant computational improvements. 
\end{abstract}

\section{Introduction}\label{sec: 1}

Paths and cycles are fundamental structures in graph theory, playing a crucial role in various fields such as network analysis \citep{wang2023analysis}, chemistry \citep{ishida2021graph}, computer science \citep{abusalim2020comparative}, biology \citep{bortner2022identifiable}, and social sciences \citep{boccaletti2023structure}. Efficiently counting paths and cycles of varying lengths is essential for understanding graph connectivity and network redundancies, and is the foundation of many graph processing algorithms, including graph learning algorithms such as some recent Graph Neural Networks (GNN) \citep{bouritsas2022improving,michel2023path}. In particular, \cite{bouritsas2022improving} demonstrated that incorporating precomputed counts of paths or cycles, either at the node level or the graph level, into the feature representation can significantly enhance the expressive power of GNNs.

%This problem of counting paths and cycles has been extensively studied in the literature. Notably, \cite{voropaev2012number} established a relationship between counting cycles of length $l$ at the edge level and counting paths of length $(l-1)$, providing explicit formulae for paths up to length six and cycles up to length seven. To date, this matrix-based approaches remain the most efficient for counting paths and cycles of lengths up to six and seven, respectively \citep{giscard2019general}. This raises a significant open question: Can a deep learning algorithm discover more efficient formulae for counting paths?

% MODIF SEB

This problem of counting paths and cycles has been extensively studied in the literature \citep{harary1971number,alon1997finding,jokic2022number}. Among existing approaches, matrix-based formulae such as those proposed in \cite{voropaev2012number} (see equation (\ref{eq: 3pathc}) for an example) are known to be the most efficient methods for paths and cycles of lengths up to six and seven, respectively \citep{giscard2019general}. This raises a significant open question: Can a deep learning algorithm discover more efficient formulae for counting paths?

%Such approaches raise a significant open question: Can a machine learning algorithm discover efficient formulae for counting paths and cycles ?

%In the early 1970s, \cite{harary1971number} introduced algorithms  for counting cycles up to length five. Two decades later, \cite{alon1997finding} refined these algorithms, extending cycle counting to lengths of up to seven. These methods can also be adapted to count cycles at the node level. In a subsequent contribution, \cite{voropaev2012number} later established a relationship between counting cycles of length $l$ at the edge level and counting paths of length $(l-1)$, providing explicit formulae for paths up to length six and cycles up to length seven. Since these algorithms rely on matrix multiplication, their computational complexity is $\grandO{n^3}$, where $n$ is the number of nodes. To date, matrix-based approaches remain the most efficient for counting paths and cycles of lengths up to six and seven, respectively \citep{giscard2019general}. 

In \cite{furer2017combinatorial} and \cite{arvindwl}, the length limits mentioned above are theoretically explained by examining the relationship between the subgraph counting problem and the $k$th-order Weisfeiler-Leman test ($\WL k$). These papers conclude that $\WL 3$ cannot count cycles longer than seven. Concurrently, \cite{Geerts} explored the connection between $\WL 3$ and a fragment of the matrix language MATLANG \citep{matlang}, defined by the operations $\cur L_3 := \{ \cdot, \transpose \ , \mathrm{diag}, \onevector, \odot \}$. This paper demonstrates that this fragment, when applied to adjacency matrices, distinguishes the same graph pairs as $\WL 3$. In order to build a $\WL 3$ GNN, \cite{piquenot2024grammatical} introduced a $\WL 3$ Context-Free Grammar (CFG). We observed that, with minor modifications, this generative framework is capable of producing all the formulae previously identified by \cite{voropaev2012number}. Taken together, these recent works enable to transform the search for a path/cycle counting algorithm into a CFG-constrained language generation problem where the aim is to build efficient path counting formulae.

%Given this, the $\WL 3$ CFG provides a suitable framework for addressing the open problem of building efficient path counting formulae. 

The use of context-free grammars (CFGs) for search has gained considerable attention in the literature. In robotics, \cite{zhao2020robogrammar} demonstrated that CFGs can effectively describe robotic structures, enabling the design and adaptation of robots for diverse tasks to be formulated as a search problem within a CFG. Similarly, in the field of AutoML, the application of CFG-based search has been explored \citep{klinghoffer2023diser, vazquez2023integrating}. In particular, \cite{vazquez2022gramml} proposed a grammar specifically designed for an AutoML task, where searching within this grammar enables the discovery of the optimal model for a given task.

Searching for formulae within a CFG corresponds to solving a combinatorial optimization problem of possibly infinite size. In recent years, Deep Reinforcement Learning (DRL) approaches have been proposed to address such problems \citep{vinyals2015pointer,khalil2017learning,silver2018general, hubert2021learning,darvariu2024graph}. A recent success of DRL has been the discovery of more efficient matrix multiplication algorithms through a Monte Carlo Tree Search (MCTS)-based approach \citep{fawzi2022discovering}. MCTS-based RL algorithms typically consist of two phases: an acting phase and a learning phase. During the acting phase, the agent selects actions based on a heuristic that combines MCTS exploration with a deep neural network that predicts both policy and value function to guide the tree search. The network is then updated during the learning phase to reflect search trees from multiple iterations. As the objective is to discover efficient formulae and considering that MCTS aligns with CFG sentence generation process due to its tree-based structure, such a Deep MCTS approach is particularly well-suited to searching formulae within CFGs.

In this paper, we propose Grammar Reinforcement Learning (GRL), a deep MCTS model capable of discovering new efficient formulae for path/cycle counting within a CFG. 
%In the context of path counting in CFGs, agent actions correspond to selecting production rules within the grammar.
This particular context raises a new research question: How to approximate a policy and a value function through a deep neural network within a CFG?

To address this question, we propose the Gramformer model, a transformer architecture that models Pushdown Automata (PDA), which are equivalent to CFGs. Gramformer is used to learn the policy and value functions within GRL.

% We demonstrate how to model Pushdown Automata (PDAs), which are equivalent to CFGs, as transformer architecture, leading to the Gramformer models. Therefore, transformers are employed to learn the policy and value functions within the MCTS-based RL framework, adressing the open question of deep learning for CFG framework. This approach introduces Grammar Reinforcement Learning (GRL), a novel algorithm designed to explore grammatical structures.
When applied to the path/cycle counting problem, GRL not only recovers the formulae from \cite{voropaev2012number} but also discovers new ones, whose computational efficiency is improved by a factor of two to six.

The key contributions of this paper are as follows: (i) We propose GRL, a generic  DRL algorithm designed to
explore and search within a given grammar. (ii) We introduce Gramformer a new transformer training pipeline compliant with the CFG/PDA framework.  (iii) We propose novel state of the art explicit formulae for path/cycle counting in graphs, leading to substantial improvements in computational efficiency.

The structure of this paper is as follows: Section \ref{sec: 2} provides the background about path/cycle counting, CFGs and PDAs, defining essential concepts. Section \ref{sec: 3} describes the design of GRL, taking as root a given CFG. Section \ref{sec: 4} presents Gramformer, connecting transformers and CFGs. Section \ref{sec: 5} discusses the results of GRL on path/cycle counting tasks. Finally, we conclude with a summary of our contributions and suggest avenues for future research.

\section{Background }\label{sec: 2}

\subsection{Path and cycle counting}

Path and cycle counting in graphs can be performed at multiple levels: graph, node, and edge. At the  \textbf{graph level}, all possible paths or cycles of a given length within the graph are counted. At the \textbf{node level}, the focus is on counting paths starting at a specific node, as well as cycles that include the node. At the \textbf{edge level}, for any non-negative integer $l$, let $P_l$ represent the $l$-path matrix where $(P_l)_{i,j}$ is the number of $l$-length paths connecting vertex $i$ to vertex $j$. Additionally, for $l > 2$, let $C_l$ represent the $l$-cycle matrix where $(C_l)_{i,j}$ indicates the number of $l$-cycles that include vertex $i$ and its adjacent vertex $j$.

As mentioned in Section \ref{sec: 1}, path/cycle counting has been extensively tackled in the literature. In the early 1970s, \cite{harary1971number} introduced algorithms for counting cycles up to length five at the graph level. Two decades later, \cite{alon1997finding} refined these algorithms, extending cycle counting to lengths of up to seven, and conjectured that these methods can also be adapted to count cycles at the node level. Later \cite{voropaev2012number} established a relationship between the counting of $l$-cycles at the edge level and the counting of $(l-1)$-paths at the edge level using a simple formula. By deriving explicit formulae for the counting of paths of length up to six at the edge level, they were able to compute the number of cycles of length up to seven. More recently, \cite{jokic2022number} rediscovered the formulae for paths of length up to four from \cite{voropaev2012number}. In contrast, \cite{giscard2019general} proposed an algorithm capable of counting cycles and paths of arbitrary lengths. However, they acknowledged that their method is slower than those presented by \cite{alon1997finding} and \cite{voropaev2012number}. Specifically, since the latter algorithms are based on matrix multiplication, they exhibit a computational complexity of $\grandO{n^3}$, where $n$ is the number of nodes. As noted by \cite{giscard2019general}, these matrix-based approaches remain the most efficient known methods for counting paths and cycles of lengths up to six and seven, respectively.

\subsection{Context-Free Grammar.}

Throughout this paper, we employ standard formal language notation: ${\displaystyle \Gamma ^{*}}$ denotes the set of all finite-length strings over the alphabet ${\displaystyle \Gamma }$, and ${\displaystyle \varepsilon }$ represents the empty string. The relevant definitions used in this context are as follows:
\begin{defin}[Context-Free Grammar]\label{def
} A Context-Free Grammar (CFG) $G$ is defined as a 4-tuple $(V, \Sigma, R, S)$, where $V$ is a finite set of variables, $\Sigma$ is a finite set of terminal symbols, $R$ is a finite set of production rules of the form $V \to \left(V \cup \Sigma\right)^*$, and $S$ is the start variable. \textit{Note that $R$ fully characterizes the CFG, following the convention that the start variable is placed on the top left and that the symbol $|$ represents "or".} \end{defin}
\begin{defin}[Derivation] Let $G$ be a CFG. For $u,v \in \left(V \cup \Sigma\right)^*$, we define $u \implies v$ if $u$ can be transformed into $v$ by applying a single production rule, and $u \derive v$ if $u$ can be transformed into $v$ by applying a sequence of production rules from $G$. \end{defin}

\begin{defin}[Context-Free Language]\label{def
} A set $B$ is called a Context-Free Language (CFL) if there exists a CFG $G$ such that $B = L(G) := \{w \mid w \in \Sigma^* \text{ and } S \derive w\}$. \end{defin}

The generation process in a CFG involves iteratively replacing variables with one of their corresponding production rules, starting from the start variable, until only terminal symbols remain.

As mentioned in Section \ref{sec: 1}, it is well known that CFGs are equivalent to PDAs \citep{schneider1968parsing, caucal1995bisimulation,baeten2023pushdown,dusell2024stack}. Usually, PDA are language acceptor, but by relabelling the input as output of the PDA, the same PDA can be used as language generator. Thus the following subsection is dedicated to defining PDA.

\subsection{PushDown Automaton}\label{sec:pda}

\begin{defin}[PushDown Automaton]
    A PushDown Automaton (PDA) is defined as a $7$-tuple $P = (\cur Q,\Sigma,\Gamma,\delta,q_0,Z,F)$ where $\cur Q$ is a finite set of states, $\Sigma$ is a finite set of symbols called the input alphabet, $ \Gamma $ is a finite set of symbol called the stack alphabet, $\delta $ is a finite subset of $  \cur Q\times (\Sigma \cup \{\varepsilon \})\times \Gamma \vers \cur Q\times \Gamma ^{*}$, the transition function, $ q_{0}\in \cur Q$ is the start state, $Z\in \Gamma $ is the initial stack symbol, $F\subseteq \cur Q$ is the set of accepting states.
\end{defin}

In the case of PDA corresponding to CFG, the input alphabet $\Sigma$ corresponds to the terminal symbol alphabet. The stack alphabet $\Gamma$ consits of $V\cup \Sigma$, which is the union of the set of variables (non-terminal symbols) and the terminal symbols. For such a PDA, there are only two states: $q_0$, the initial state and, $q_1 \in F$, the accepting state. The initial stack symbol is $Z = S$, where $S$ is the start variable of the CFG. The transition function  $\delta$ consists of two types of transitions:
\begin{itemize} 
\item \textbf{Transcription transitions}: If the top of the stack is a terminal symbol $a \dans \Sigma$, the transition is of the form $\delta(q_0,a,a) = \{(q_0,\varepsilon)\}$. This indicates that the system remains in state $q_0$, outputs the symbol $a$, and removes $a$ from the stack.
\item \textbf{Transposition transitions}: If the top of the stack is a variable $\nu \in V$, the transition is of the form $\delta(q_0,\varepsilon,\nu) = \{(q_0,r),r \in V_\nu\}$, where $V_\nu \inclu R$ is the subset of rules for $\nu$. This means that the system stays in state $q_0$, produces no output, and replaces  $\nu$ with the rule $r$ on the stack.
\end{itemize}

In the same way that production rules fully defines a CFG, the transition function $\delta$ completely specifies a PDA. For a PDA constructed from a CFG, the transposition transitions alone are sufficient to define the automaton. 

A PDA generates a string by starting in the initial state $q_0$, with the stack initialized to $Z$ and the generated string $s$ initialized to $\varepsilon$. The PDA then processes the top symbol $t$ of the stack according to the transition function $\delta$. If $t \in \Sigma$, a transcription occurs: $t$ is popped from the stack and appended to the output string $s$. If $t \in V = \Gamma \prive \Sigma$, a transposition occurs: $t$ is popped from the stack, and some $v \in \{v, (q_0,v) \in \delta(q_0,\varepsilon,t) \}$ is pushed onto the stack. Since $v \in \Gamma^*$, it may consist of multiple symbols, which are pushed onto the stack in reverse order. The process continues until the stack is empty, at which point the PDA transitions to the accepting state $q_1$, and the generated string $s$ is a member of the language of the corresponding CFG.

\section{Generating path/cycle counting formula through GRL}\label{sec: 3}

The following subsection presents a specific CFG (see Section \ref{sec: 2}) designed to address the open problem of path counting.

\subsection{From path matrix formulae to the CFG $G_3$}

Let $\cur G = (\cur V, \cur E)$ denote an undirected graph, where $\cur V = \intervalleentier{1}{n}$ represents the set of $n$ nodes, and $\cur E \subseteq \cur V \times \cur V$ represents the set of edges. We define the adjacency matrix $A \in \left\lbrace 0,1\right\rbrace^{n \times n}$, that encodes the connectivity of $\cur G$, the identity matrix $\identite \in \left\lbrace 0,1 \right\rbrace^{n \times n}$, and the matrix $\J \in \left\lbrace 0,1\right\rbrace^{n \times n}$, that is filled with ones except along the diagonal. 

In the work of \cite{voropaev2012number}, all of the proposed formulae are linear combinations of terms composed of matrix multiplications and Hadamard products (denoted by $\odot$) applied exclusively on the arguments $A$, $\identite$, $\J$. For example, the matrix formula in equation (\ref{eq: 3pathc}) is used to count the number of 3-paths between two nodes in the graph. The formulae for path of length four to six can be found in Appendix \ref{app : countedge} of the supplementary material.

\begin{align}\label{eq: 3pathc}
P_3 = \J \odot A^3 - (\identite \odot A^2)A - A(\identite \odot A^2) + A
\end{align}

To generate the terms of Voropaev's formulae, we define the CFG $G_3$ in equation (\ref{eq:G3}). 

\begin{align}\label{eq:G3}
M &\vers (M\odot M) \ | \ (MM) \ | \ A \ | \ \identite  \ | \ \J .
\end{align}

Voropaev's formulae are linear combinations of sentences of $L(G_3)$. This ensures that the problem of counting paths of length up to 6 can be addressed through finding linear combinations of sentences of $L(G_3)$. 

Additionally, we prove in Appendix \ref{app:cfg} that $G_3$ is $\WL 3$ equivalent, resulting in Theorem \ref{thm:g3}.

\begin{thm}[$\WL 3$ CFG]\label{thm:g3}
$G_3$ is as expressive as $\WL 3$
\end{thm}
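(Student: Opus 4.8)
The plan is to prove the claim by composing two equivalences: the known correspondence between the \textsc{MatLang} fragment $\cur L_3 = \{\cdot, \transpose, \mathrm{diag}, \onevector, \odot\}$ and $\WL 3$ established in \cite{Geerts}, and a new equivalence between $\cur L_3$ and the linear span of $L(G_3)$. Concretely, I would show that a matrix function of $A$ is expressible over $\cur L_3$ if and only if it is expressible as a linear combination of sentences of $L(G_3)$; since two graphs are $\WL 3$-indistinguishable exactly when they agree on all $\cur L_3$ expressions, they will then also agree exactly on all linear combinations of $L(G_3)$ sentences, giving the theorem. Throughout, expressiveness is measured by the pairs of graphs an expression can distinguish, so it suffices to match the two languages up to this distinguishing power.

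The forward inclusion $L(G_3) \subseteq \cur L_3$ is immediate by structural induction on derivations: the productions $(MM)$ and $(M\odot M)$ are the matrix product and the Hadamard product, both native to $\cur L_3$, and each terminal lies in $\cur L_3$, since $A$ is the input matrix, $\identite = \mathrm{diag}(\onevector)$, and $\J = \onevector\onevector^\transpose - \mathrm{diag}(\onevector)$. Hence every sentence of $L(G_3)$, and therefore every linear combination of such sentences, is an $\cur L_3$ expression.

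The reverse simulation $\cur L_3 \subseteq \mathrm{span}(L(G_3))$ is the substantive direction. Two observations tame most of the operations. First, $\mathrm{span}(L(G_3))$ is closed under transpose: by induction $A^\transpose = A$, $\identite^\transpose = \identite$, $\J^\transpose = \J$, while $(XY)^\transpose = Y^\transpose X^\transpose$ and $(X\odot Y)^\transpose = X^\transpose \odot Y^\transpose$, so transposition never leaves the span. Second, the diagonal operator is realized by a Hadamard product with the identity, $\mathrm{diag}(M) = \identite \odot M$, and the all-ones matrix is $\onevector\onevector^\transpose = \J + \identite$. The only genuine mismatch is the vector sort of $\cur L_3$: the constant $\onevector$ and the vector-valued intermediate results have no direct counterpart in the matrix-only grammar $G_3$.

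I would resolve this mismatch with an encoding argument, representing each $\cur L_3$ vector $v$ by the matrix $v\onevector^\transpose$ and each scalar by a uniform multiple of $\onevector\onevector^\transpose = \J + \identite$, then checking by induction on the structure of $\cur L_3$ expressions that every operation (matrix product, Hadamard product, $\mathrm{diag}$, transpose, and the constant $\onevector$) maps to a linear combination of $G_3$ operations over the encoded operands, with the final scalar or node-level readout recovered as a fixed entry of the encoded matrix. The main obstacle is precisely this bookkeeping between the two sorts: one must verify that matrix--vector products, broadcasts, and reductions are faithfully simulated and that the encoding is preserved at each step, which is where a careful normal form for $\cur L_3$ sentences is needed. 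Once the encoding is established, chaining $\mathrm{span}(L(G_3)) \equiv \cur L_3 \equiv \WL 3$ yields Theorem \ref{thm:g3}.
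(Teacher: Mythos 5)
Your overall strategy is sound and, at its core, coincides with the paper's: take a known $\WL 3$-equivalent formalism that carries a vector sort, and show that the vector sort can be simulated inside the matrix-only grammar $G_3$, so that the equivalence transfers. The differences are in the execution. The paper does not go back to Geerts' $\cur L_3$ directly; it starts from the vector-sorted CFG of \cite{piquenot2024grammatical} (rules $V_c \to MV_c \mid \onevector$, $M \to (M\odot M)\mid(MM)\mid \mathrm{diag}(V_c)\mid A$), which is already known to be $\WL 3$-equivalent, and then eliminates the variable $V_c$ by encoding vectors as \emph{diagonal} matrices: it splits $Nw = (N\odot\identite)w + (N\odot\J)w$, expresses the diagonal matrix of each summand inside $G_3$ using products with $\J$ and $\mathrm{diag}(w)$ followed by a Hadamard product with $\identite$, and invokes a small proposition (if a sum of sentences separates two graphs, one of the summands already does) — the same, essentially trivial, observation you need in order to pass between span-level and sentence-level separation in your set-up. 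Your route through $\cur L_3$ is legitimate but strictly longer: you must additionally handle transpose and the full sort discipline (scalars, row and column vectors) of $\cur L_3$, which the reduced CFG of \cite{piquenot2024grammatical} has already disposed of. On the other hand, your explicit forward inclusion $L(G_3)\subseteq \cur L_3$ makes visible the upper-bound direction that the paper leaves implicit, which is a genuine merit of your write-up.

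There is, however, one concrete step in your plan that fails as stated: the rank-one encoding $v \mapsto v\,\onevector^{\top}$ is not preserved under all $\cur L_3$ operations with graph-independent coefficients. For an outer product $v\cdot w^{\top}$, which is formable in $\cur L_3$, multiplying the encodings gives $(v\onevector^{\top})(\onevector w^{\top}) = n\, v w^{\top}$, and similar graph-size factors appear whenever a scalar is broadcast against a vector; hence the induction invariant ``encoded value equals a fixed linear combination of $G_3$ sentences'' is false across graphs of different sizes. This is repairable in two ways: either restrict attention to pairs of graphs of equal size (harmless for separating power, since graphs of different sizes are already separated by the sentence $\identite$) and carry a known factor $n^{d}$ through the induction, or switch to the diagonal encoding $v\mapsto \identite\odot(v\onevector^{\top}) = \mathrm{diag}(v)$, under which the outer product becomes exactly $\mathrm{diag}(v)(\J+\identite)\mathrm{diag}(w)$ and the matrix--vector product becomes $\identite\odot\bigl(N\,\mathrm{diag}(w)(\J+\identite)\bigr)$, with no spurious factors. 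The latter fix is essentially the encoding the paper adopts, and it is the cleaner choice; until one of these repairs is made, your induction as proposed does not go through.
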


While CFGs are theoretical objects, PDAs are the practical tools for processing and applying the production rules of a CFG to ensure the correct generation of valid sentences according to the grammatical structure. The following subsection derives a PDA (see Section \ref{sec: 2}) from $G_3$.

\subsection{From $G_3$ to the PDA $D_3$}
We denote as $D_3$ the PDA described by the following transition $\delta$: 
\begin{align*}
    \delta(q_0,\varepsilon,M) &= \{(q_0,(M\odot M)),(q_0,(MM)),(q_0,A),(q_0,\identite), (q_0,\J)  \},
\end{align*}
which corresponds directly to the production rules of $G_3$.

Figure \ref{fig:PDAjaa} illustrates how the sentence  $(\J \odot(AA)) = J\odot A^2 \in L(G_3)$ is generated by the PDA $D_3$. 

\begin{figure}
    \centering
    \includegraphics[width = .54\textwidth]{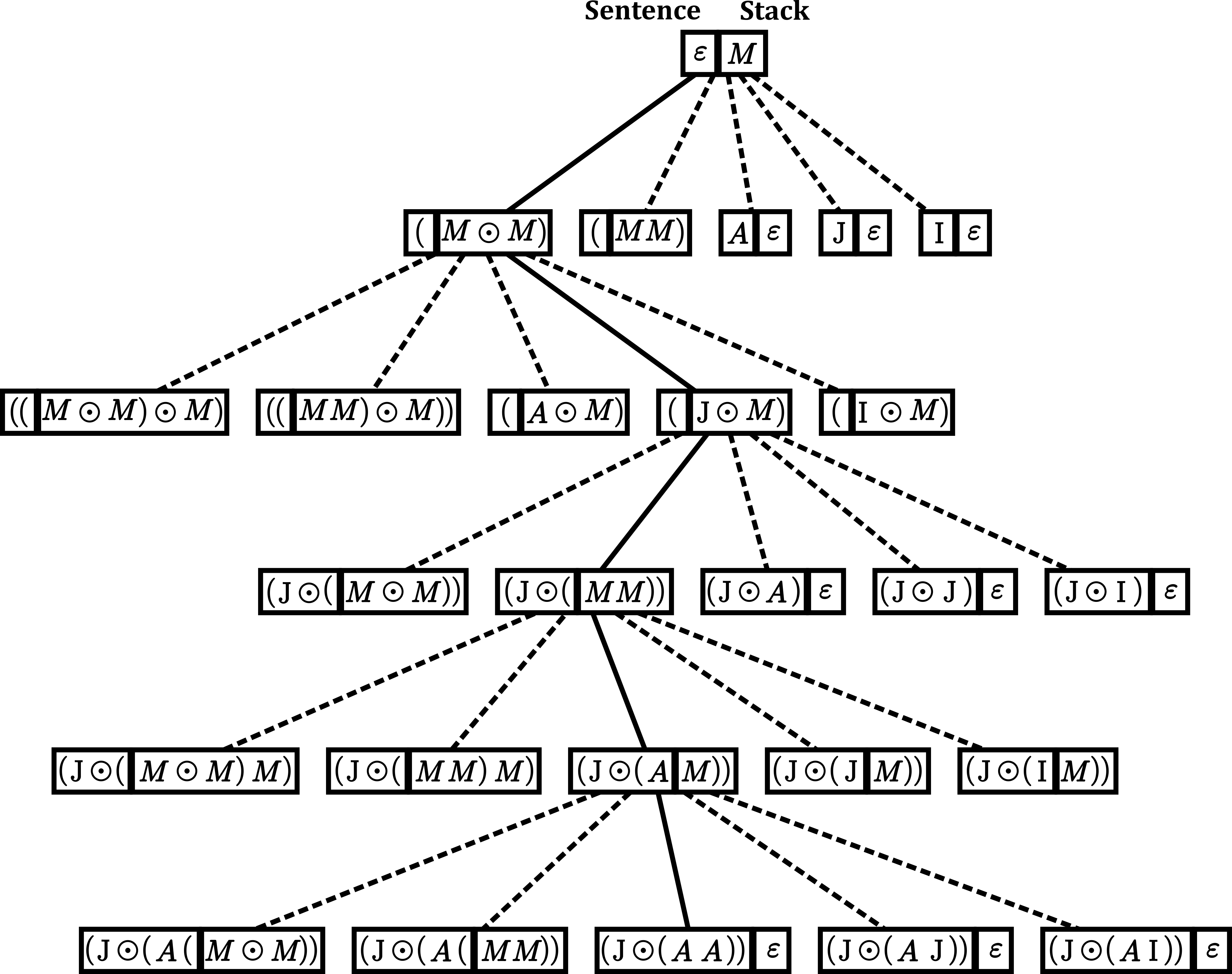}
    \includegraphics[width = .45\textwidth]{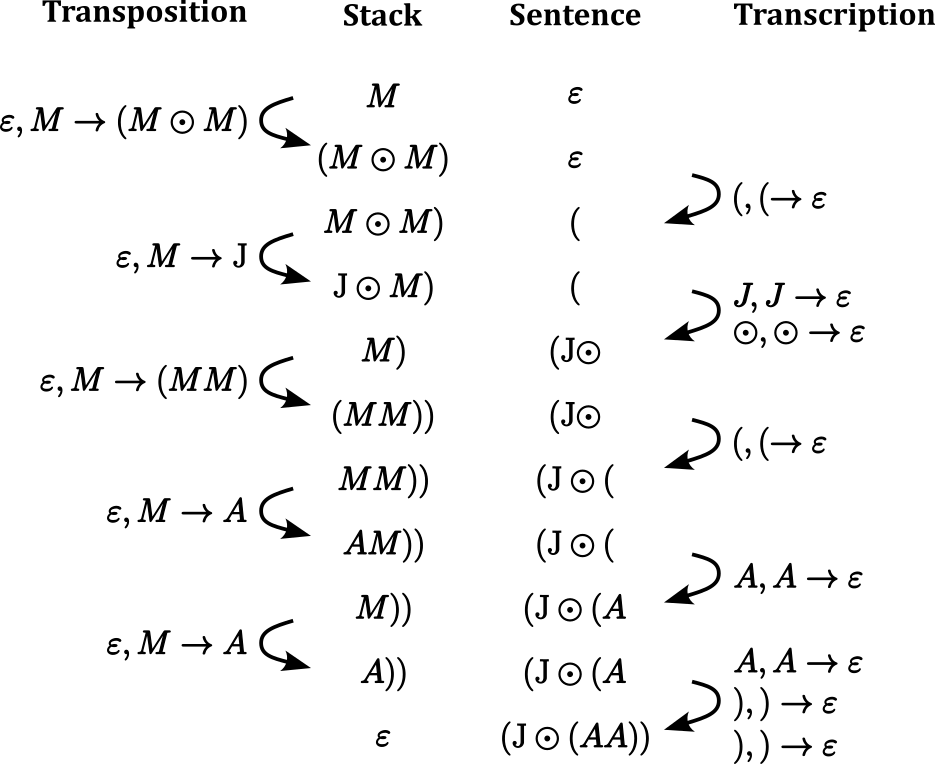}
    \caption{The left diagram  illustrates a path in the derivation tree of the PDA $D_3$ which generates the sentence $J\odot A^2 \in L(G_3)$. The right diagram details the process of generating this sentence, emphasizing the transcription and transposition loops. As depicted, the stack fills during transposition steps and empties during transcription steps, eventually leading to the derivation of a sentence from the language.}
    \label{fig:PDAjaa}
\end{figure}

\subsection{Search in $D_3$ through Grammar Reinforcement Learning}

To find efficient formulae for path and cycle counting, we propose a two step strategy as illustrated by Figure \ref{fig:exmcts}. The first step is to generate a set of sentences belonging to $G_3$ by the $D_3$ generation process. The second step compares a linear combination of this set with a ground truth matrix in order to evaluate the corresponding formula. In the following of this subsection, we detail each of these steps.

As stated before, the tree structure of a sentence generation within PDA (see Figure \ref{fig:exmcts}) aligns with MCTS algorithm. Such algorithms have been proposed and refined over the last decade to guide the search within trees with a general heuristic \citep{swiechowski2023monte}. In this work, we propose an MCTS-based DRL algorithm, termed Grammar Reinforcement Learning (GRL) adapted to the path counting open problem, generating sets of different sentences. 

In GRL, MCTS performs a series of walks through the PDA, which are stored in a search tree. The nodes of the tree represent states $I$, which are a concatenation of the written terminal symbols and the stack. The edges correspond to actions defined by the CFG rules $r$ that can be applied at those states.
%As noted by \cite{fawzi2022discovering}, the action space at each step of their algorithm is vast, necessitating the use of sampling techniques. In contrast, GRL does not face this constraint, as its action space is limited by the number of applicable rules for each variable.

Each walk begins at the start state $I_0 = \left\lbrace Z,\cdots, Z\right\rbrace$, whose cardinality is the number of desired sentences, and terminates when a state contains only terminal symbols. Such terminal states are sets of sentences located in leaf nodes. For each state-action pair $(I,r)$, the algorithm tracks the visit count $N(I,r)$, the empirical rule value $Q(I,r)$, and two scalars predicted by a neural network: a policy probability $\pi(I,r)$ and a value $v(I,r)$. From a reinforcement learning perspective, $ \frac{Q(I, r)}{N(I, r)}$ represents the expected return over all possible trajectories that originate from state $I$ and follow action $r$. At each intermediate state, a rule action $r$ is selected according to the following equation:

\begin{align}\label{eq : mcts}
    \underset{r}{\mathrm{argmax}} \quad \alpha Q(I,r) + (1-\alpha) v(I,r) + c(I) \pi (I,r) \frac{\sqrt{\sum_aN(I,a)}}{1 + N(I,r)},
\end{align}
where the exploration factor $c(I)$ regulates the influence of the policy $\pi(I,r)$ relative to the Q-values, adjusting this balance based on the frequency of node traversal. The parameter $\alpha \in \intervalleff 0 1$ controls the reliance on neural network predictions. After a walk reaches a leaf node, the visit counts and the values are updated via a backward pass.

To update the values, it is necessary to evaluate a leaf node. Its associated set of sentences is computed for a collection of graphs, and a linear combination of these computed sentences is derived by comparing them against a ground truth for each graph. The resulting value, which reflects the relevance of the sentences to a specific path counting problem, is used to empirically update the tree that is constructed during sentence generation. The derivation of this linear combination is detailed in Appendix \ref{app:eval}, with a specific focus on Figure \ref{fig:evalscheme}. A concrete example of this approach, applied to the problem of counting 3-paths within $G_3$, is shown in Figure \ref{fig:exmcts}. To encourage the generation of efficient sentences, each CFG rule $r$ is penalized by a value $P_r$ in the reward definition, reflecting its computational cost. Additionally, to prevent the generation of overly long sentences, the number of characters is constrained by a maximum limit, $C_{\max}$. 
\begin{figure}
    \centering
    \includegraphics[width = \textwidth]{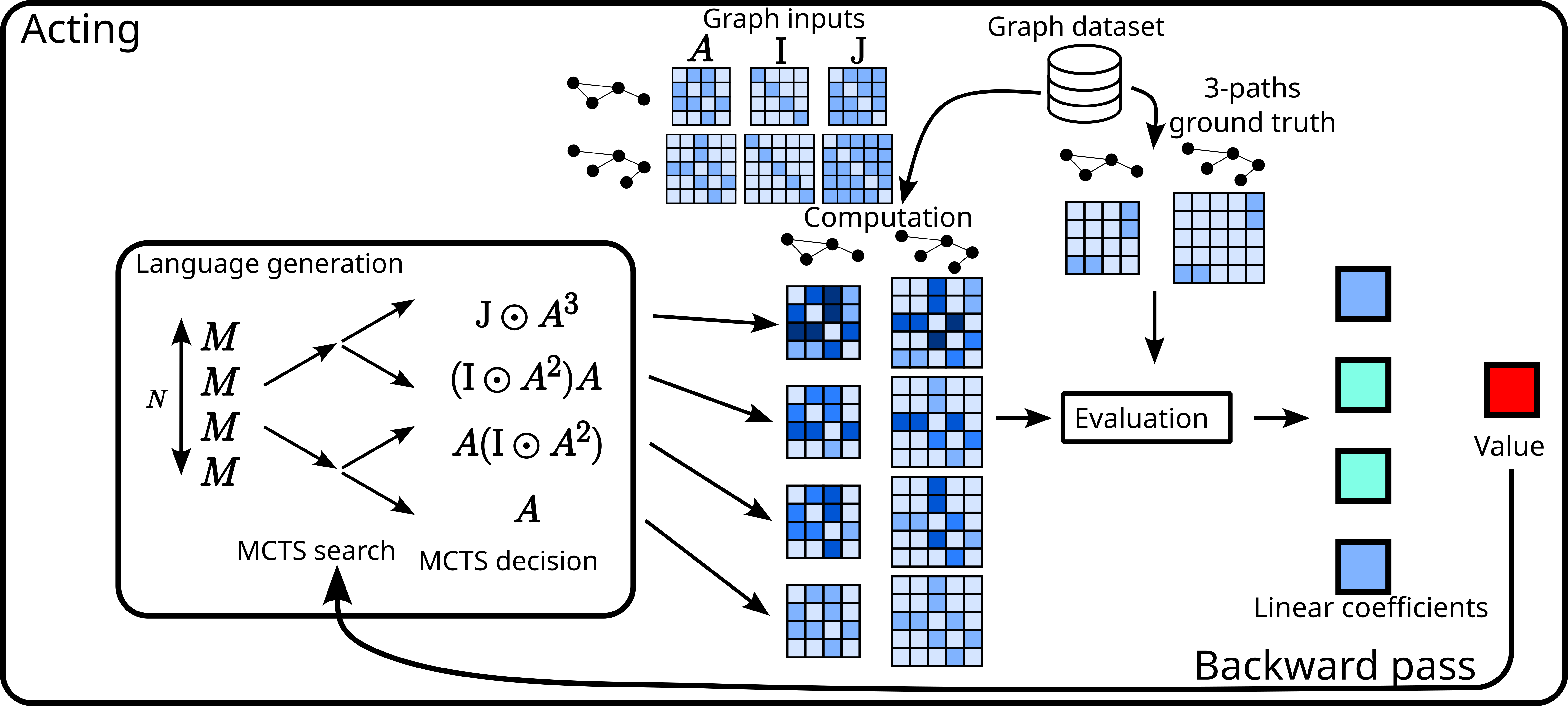}
    \caption{From left to right: The agent selects a set of $N$ sentences based on an MCTS heuristic. These sentences are computed for a given set of graphs. The computation is then evaluated against a ground truth, yielding a linear combination of the sentences and a value representing their pertinence. This value is subsequently backpropagated through the MCTS search tree. }
    \label{fig:exmcts}
\end{figure}

After a sufficient number of MCTS, the sequences of nodes and edges from each walk are used to train the neural network. The ratio $N(I,r)/N(I)$ provides a policy derived from MCTS exploration, while $Q(I,r)$ represents the empirical expected return for the current state. The policy is learned using a Kullback–Leibler (KL) divergence loss, and the value function is trained using a mean squared error (MSE) loss. The pseudo-code for each algorithm of GRL is provided in Appendix \ref{app:algo}. Figure \ref{fig : mcts} depicts both the acting and the learning parts.

\begin{figure}
   \centering
    \includegraphics[width = \textwidth]{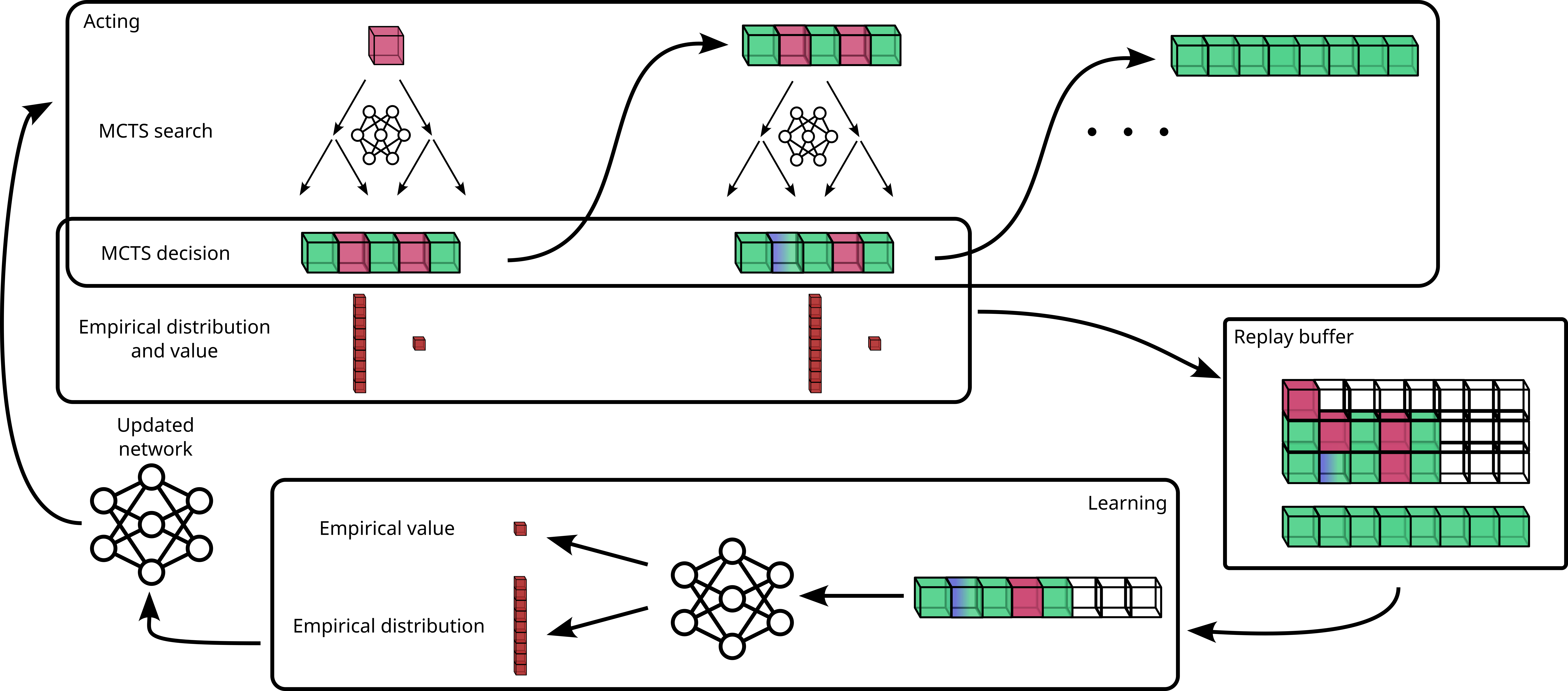}
    \caption{In the acting phase, rules are selected based on both the MCTS algorithm and the neural network outputs. Each time MCTS selects a node, the decision, empirical policy, and value of the node are stored in a replay buffer. During the learning phase, the neural network is updated by predicting the policy and value functions based on the decisions stored in the replay buffer.}
    \label{fig : mcts}
\end{figure}

The neural network—serving as a memory of the search trees that the agent has previously explored—must be capable of learning the policy and value distributions of the sentence generation within the CFG/PDA. As discussed in Section \ref{sec: 1}, designing an architecture that can effectively learn within a CFG/PDA remains an open research question. In the next section, we present the neural network used for estimating the policy and value functions in the GRL algorithm.

\section{Gramformer}\label{sec: 4}

Since our problem is related to the generation of sentences within a language, a transformer architecture fits with this CFG framework. Central to this architecture is the concept of tokens, which represent individual units of input data. 
%In a Transformer model, each token is processed in parallel, and the model computes attention scores that determine how much each token should focus on others within a sequence. This attention mechanism allows the model to capture complex relationships and dependencies between tokens, regardless of their distance in the input sequence. By assigning importance to different tokens, Transformers can effectively model context, semantics, and even syntax, making tokens crucial for the model's ability to understand and generate coherent and contextually relevant language.

We propose Gramformer, a transformer architecture that follows the production rules of a given CFG, through a PDA. It relies on the assignment of the elements of the transition function $\delta$ into three distinct sets of tokens. 
Recall that $\delta$ can be partitioned into two subsets: the transcription set $\delta_w$  (for writing) and the transposition set $\delta_r$ (for replacing). Specifically, $\delta_r = \{\delta(q_0,\varepsilon,\nu) = \{(q_0,v), v\in R_\nu)\},\nu \in V\}$, where each $\nu \in V$ represents a variable in the CFG.

For each variable $\nu \in V$, we define a \textbf{variable token} corresponding to $\nu$. For each element in $\delta(q_0,\varepsilon,\nu)\in \delta_r$, we define a \textbf{rule token} representing the specific production rule. This rule token is divided in two subsets. If the rule contains a variable, the token is classified as a \textbf{non-terminal rule token}. If the rule consists only of terminal symbols (i.e., $v \dans \Sigma$),  the token is classified as a \textbf{terminal rule token}. Any symbol in $\delta_w$ is assigned as part of the \textbf{terminal token} set along with the terminal rule token.

For each variable token, a corresponding mask is provided. This mask indicates the rule tokens associated with that variable. Figure \ref{fig:G_3toP_3} illustrates this framework applied to the CFG $G_3$ that contains only one variable. An example of a CFG with more variables is provided in Figure \ref{fig:tokengram} of Appendix \ref{app:g3tilde}. Once all tokens have been defined, the Gramformer is tasked with predicting two ouputs for a  given input state. The first output is the probability of selecting the production rule for a given variable. The second one is a scalar that corresponds to the value of the given state. 

\begin{figure}
    \centering
    \includegraphics[width = \textwidth]{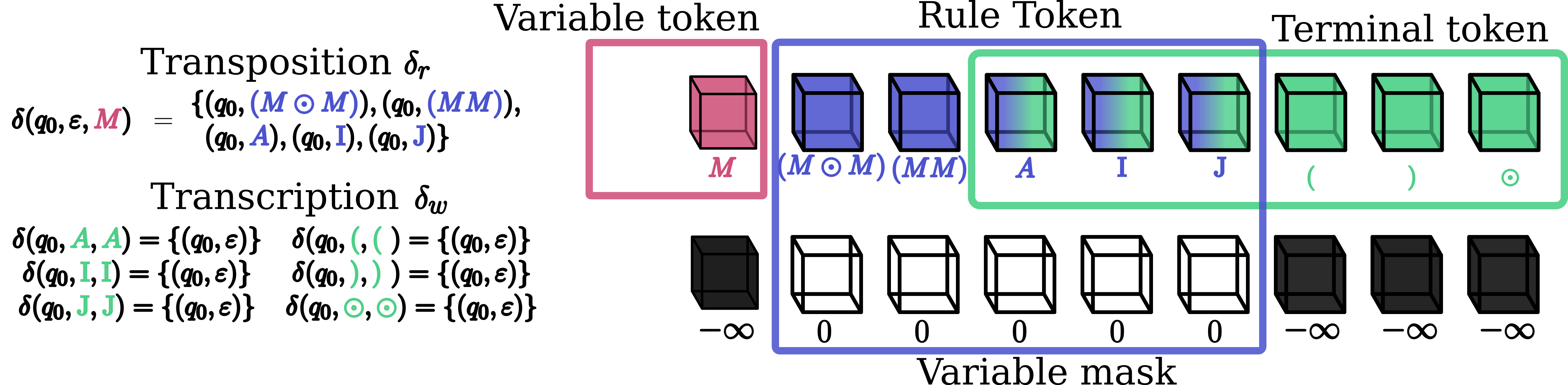}
    \caption{From PDA to grammar tokens: $D_3$ is turned into three sets of tokens. The corresponding variables of each element of $\delta_r$ are turned into variable tokens. For each variable token, a set of rule tokens is defined. Eventually, for every corresponding terminal symbols of $\delta_w$ a terminal token is defined. In the end, for each variable token, a variable mask is defined.}
    \label{fig:G_3toP_3}
\end{figure}

Gramformer follows a classical encoder-decoder architecture with self-attention and cross-attention mechanisms. 
At any time, the model's input $I$ consists of the concatenation of the stack and the set of terminal symbols
generated so far,  representing a state. %Initially, the stack $s_0$ is set to the start variable $S$, and the sentence $w_0$ is initialized to the empty string $\varepsilon$, making the initial input $I_0 = S$. 

%At each time step $t$, 
The input $I$ is read until a variable token associated to a variable symbol is encountered. This token, denoted as $\nu$ is passed to the encoder. The decoder receives the encoder output and the input $I$. The first output of the decoder is combined with the mask corresponding to $\nu$, so that tokens not associated with $\nu$ are set to $-\infty$. This masking ensures that, when the softmax function is applied to the first decoder's output, it yields a valid probability distribution over the rules of $\nu$.

%The model then selects a rule $R$ corresponding to the highest probability (argmax) in the softmax output. The selected rule $R_t$ is then used to generate the next sequence of tokens by replacing $V_t$ with its corresponding combination of variables and terminal symbols, yielding the new input $I_{t+1}$. This process repeats in an autoregressive manner until no variables remain in $I_t$, i.e., the stack $S_t$ is empty and a complete sentence from the CFG has been generated. 
The pseudo-code for each algorithm in this framework is provided in Appendix \ref{app:algo}. Figure \ref{fig:exampletransgrammarunique} depicts the Gramformer process for a given input.

\begin{figure}
    \centering
    \includegraphics[width = .5\textwidth]{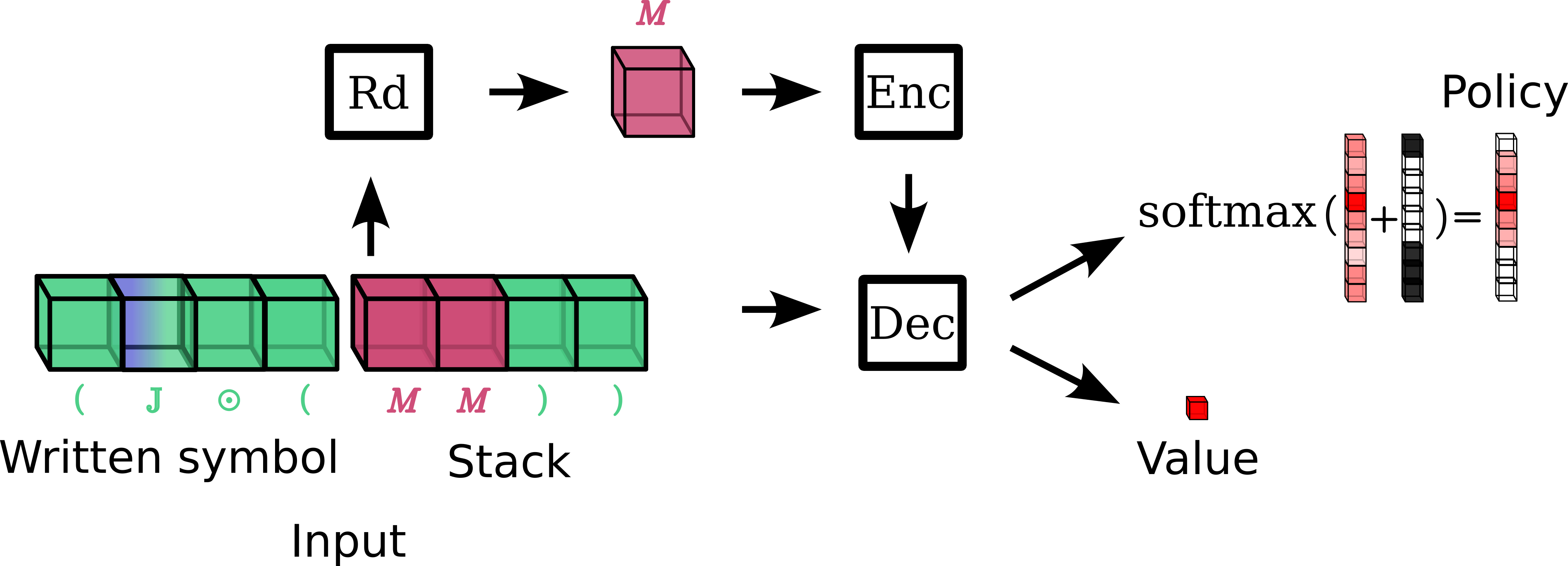}
    \caption{The input is read until the first variable token (\textbf{Rd}). This token is passed to the encoder (\textbf{Enc}). The decoder (\textbf{Dec}) receives the encoder output and the input. The first output of the decoder is combined with the mask corresponding to variable token to generate a policy. The second output is the value.}
    \label{fig:exampletransgrammarunique}
\end{figure}

Note that Gramformer, in an autoregressive mode of operation, can generate sentences within a CFG, simulating a PDA. Figure \ref{fig:exampletransgrammar} and Figure \ref{fig:sentencegeneration} of the Appendix \ref{app:g3tilde} illustrate this generation of the sentence $(\mathrm J\odot(AA)) \in L(G_3)$ using the Gramformer architecture coupled with a replace block. The path in the derivation tree of $D_3$ resulting in the generation of this sentence is provided in Figure \ref{fig:PDAjaa}.

%In this framework, for each sequence of token that belongs to the CFG language Gramformer predicts a probability distribution over the rules. 

\begin{figure}
    \centering
    \includegraphics[width = \textwidth]{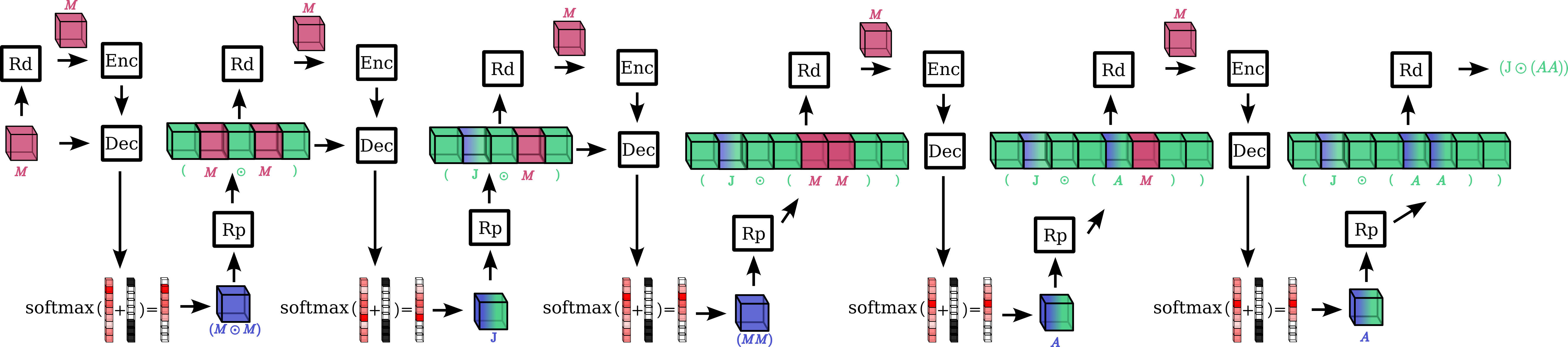}
    \caption{A sequence of tokens is fed into the transformer architecture, beginning with the start variable token. The transformer ouputs the rule token corresponding to the current variable token. The predicted rule’s corresponding variables and terminal symbols replace the current variable token, producing a new sequence of tokens. The process is repeated, until no variables remains. At this last step, a sentence from the grammar is generated.}
    \label{fig:exampletransgrammar}
\end{figure}

We now have the necessary components to use GRL on the path counting problem, which is described in the following section.

\section{Finding more efficient formulae for counting with RL.}\label{sec: 5}

To address the problem of path counting at the edge level, we apply GRL using a slightly modified version of the grammar $G_3$, denoted $\tilde G_3$. This grammar generates matrices of $L(G_3)$ with a null diagonal, reducing the search space. For more details on this modified grammar, please refer to Appendix \ref{app:g3tilde}.

The primary objective of this experiment was to demonstrate that GRL can successfully derive the path counting formulae $P_l$ proposed in \cite{voropaev2012number}. Specifically, for $l = 2$, GRL successfully identified the formula $P_2$. For path lengths $l \in {3,4,5,6}$, GRL not only derived the $P_l$ formulae but also discovered more efficient alternatives, denoted as $P_{l}^*$. These new formulae significantly reduce the time complexity of $l$-path counting by factors of 2, 2.25, 4, and 6.25, respectively. The formulae for $P_{2}^*$ through $P_{4}^*$ are provided below, while those for $P_{5}^*$ and $P_{6}^*$ can be found in Appendix \ref{app : countedge}.

\begin{align*}
        P_{2}^* &= \J \odot A^2, \\
        P_{3}^* &= \J \odot(A(\J \odot A^2)) -  A\odot(A\J), \\
        P_{4}^* &= \J\odot (A(\J \odot(A(\J \odot A^2)))) -  \J \odot (A(A\odot(A\J)))  \\
 &\quad - \J \odot ((A\odot(A\J))A) - A\odot((A \odot A^2)\J) + 2A \odot A^2.
    \end{align*}
    
For each formula, we prove in Appendix \ref{app : countedge} that $P_l = P_{l}^*$, leading to the following theorem.
\begin{thm}[Efficient path counting]\label{thm:path}
    For $l \in \{2,3,4,5,6\}$, $(P_{l}^*)_{i,j}$ computes the number of $l$-paths starting at node $i$ and ending at node $j$.
 \end{thm}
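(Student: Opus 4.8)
The plan is to reduce the theorem to a set of purely algebraic matrix identities. Since the formula $P_l$ of \cite{voropaev2012number}, recalled in Section \ref{sec: 2}, is already established to satisfy $(P_l)_{i,j} = \#\{l\text{-paths from } i \text{ to } j\}$, it suffices to prove the five matrix identities $P_l^* = P_l$ for $l \in \{2,3,4,5,6\}$; the counting property of $P_l^*$ then follows immediately from that of $P_l$. Thus the combinatorial content is entirely inherited from \cite{voropaev2012number}, and the remaining work is to verify that the two families of expressions, both built from $A$, $\identite$ and $\J$ via matrix product and Hadamard product $\odot$, coincide as matrices on every graph.

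First I would record the algebraic toolkit that reduces each expression to a canonical normal form. The key facts are: $\J = \mathbf{1}\mathbf{1}^{\top} - \identite$, so that $\J \odot M$ is exactly $M$ with its diagonal zeroed; $A = A^{\top}$ has zero diagonal; the entries of $A$ are binary, hence $A \odot A = A$; the operator $\identite \odot M$ extracts $\mathrm{diag}(M)$; and $\odot$ distributes over matrix addition while respecting the transpose, $(M \odot N)^{\top} = M^{\top} \odot N^{\top}$. With these rules, every term of $P_l$ and of $P_l^*$ can be rewritten as a linear combination of products of the atoms $A^k$, $\mathrm{diag}(A^k)$ and $A^k\J$, with all diagonal contributions tracked explicitly. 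In particular the recurring block $A \odot (A\J)$ simplifies entrywise to $A_{i,j}(d_i - A_{i,j}) = (d_i - 1)A_{i,j}$, with $d_i$ the degree of node $i$, which is precisely the correction that removes the degenerate walks $i \to k \to i$ that are not paths.

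With the normal form in hand, I would compare the two sides. The base case $l = 2$ is immediate, since $P_2^* = \J \odot A^2 = P_2$: the entry $(A^2)_{i,j}$ already counts the $2$-walks from $i$ to $j$, and $\J \odot$ discards the diagonal closed walks. For $l = 3$ and $l = 4$ I would expand $P_l^*$ using the toolkit and match it term by term against the expression for $P_l$ from \cite{voropaev2012number}, the nontrivial cancellations being exactly those produced by the $A \odot (A\J)$ and $\identite \odot A^2$ blocks. The cases $l = 5$ and $l = 6$ follow the same template.

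I expect the main obstacle to be purely one of bookkeeping in the cases $l = 5$ and $l = 6$: the number of monomials in the expansion grows rapidly, and the inclusion–exclusion corrections that discard walks revisiting a vertex must be tracked without error. The most robust way to avoid algebraic slips is to fix generic indices $i \neq j$ and verify the identity entrywise, writing each side as an explicit sum over intermediate vertices together with its degree- and adjacency-valued corrections, rather than manipulating the matrix expressions symbolically.
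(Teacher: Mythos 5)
Your reduction coincides with the paper's: both treat the counting content as inherited from \cite{voropaev2012number} and reduce the theorem to the matrix identities $P_l^* = P_l$ for $l\in\{2,\dots,6\}$, valid for symmetric binary matrices with zero diagonal. Where you diverge is in how those identities get verified. The paper works symbolically, organized around two devices: the splitting $A^{k+1} = A((\J+\identite)\odot A^k)$, which peels off the nested $\J\odot(A(\cdots))$ layers of $P_l^*$ one at a time, and Lemma \ref{lem : simplemultiplication}, which states $P\odot(M\J) = (\identite\odot N)P - P\odot M$ whenever $N_{i,i} = \sum_k M_{i,k}$; that lemma is exactly the structured form of your entrywise observation that $(A\odot(A\J))_{i,j} = (d_i-1)A_{i,j}$. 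The payoff of the paper's organization is recursion: the identity proved for $P_3^*$ (equality (\ref{eq : equality1})) is reused verbatim inside the proof for $P_4^*$, and the $P_3^*$, $P_4^*$, $C_{4f}$ identities are reused inside those for $P_5^*$ and $P_6^*$, which is precisely what keeps the $l=5,6$ bookkeeping --- the obstacle you yourself flag --- under control. Your entrywise normal-form plan is sound in principle, since both sides are polynomials in the entries of $A$, but two points need care. First, your proposed atom set $A^k$, $\mathrm{diag}(A^k)$, $A^k\J$ is too small: terms such as $(\J\odot A^2)\odot((A\odot A^2)\J)$ or $\J\odot(A(\J\odot(A(\J\odot A^2))))$ force the normal form to include Hadamard products of atoms and nested diagonal extractions like $\identite\odot(A(\identite\odot A^2)A)$, so the class must be closed under $\odot$ and diagonal extraction, and it grows accordingly. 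Second, without the recursive reuse of lower-order identities, the entrywise comparison of $P_6$ (roughly forty terms) against $P_6^*$ is exactly the kind of computation where unstructured bookkeeping fails; your route works, but the lemma the paper isolates is the device you would likely end up rediscovering to make $l=5,6$ tractable.
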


It is visually obvious that $P_{3}^*$ is more compact than $P_3$. To quantify this, we compare the number of matrix multiplications required, which allows us to derive the ratio of time complexity between the formulae. The theoretical time savings between $P_l$ and $P_{l}^*$ are detailed in Appendix \ref{app : countedge}. 

\begin{figure}
    \centering
    \includegraphics[width=0.24\textwidth]{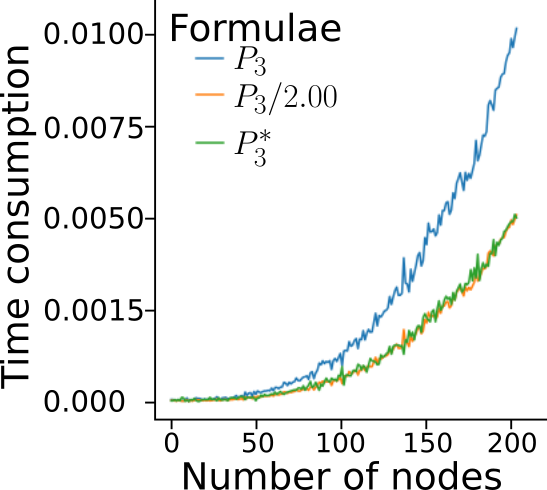}
    \includegraphics[width=0.24\textwidth]{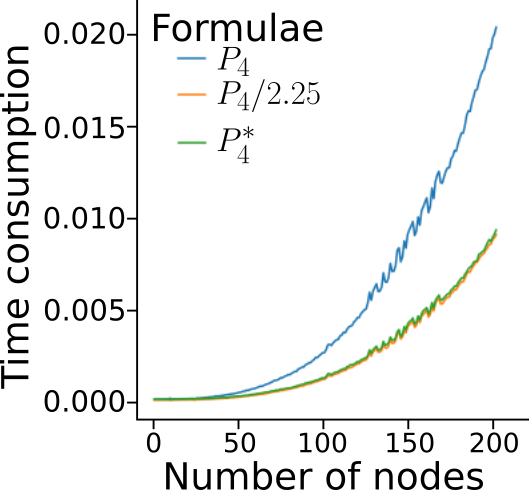}
    \includegraphics[width=0.24\textwidth]{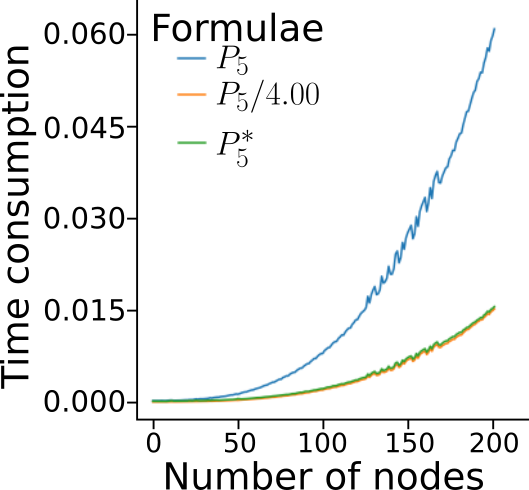}
    \includegraphics[width=0.24\textwidth]{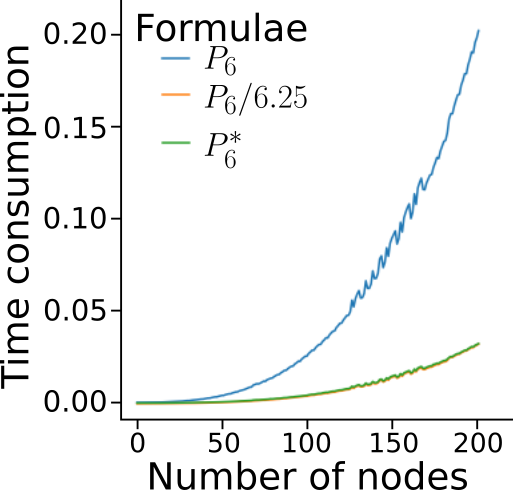}
    \caption{Comparison of the time consumption of $P_l$ and $P_{l}^*$ in function of the number of nodes for $l \in \{3,4,5,6 \}$. Each time, the yellow correspond to the time computation of $P_l$ divided by the theoretical gain of time consumption.}
    \label{fig:3pathcomplexity}
\end{figure}

We also assessed the empirical time savings across various random graphs. For each graph, the time required to compute each formula was recorded, and the average computation time was calculated for graphs of the same size. To compare these results with the theoretical time savings, we divided the mean computation time of $P_l$ by the corresponding theoretical time reduction factor. The results of these experiments are presented in Figure \ref{fig:3pathcomplexity}, demonstrating a strong alignment between empirical and theoretical gains. This confirms the significant time savings provided by the new formulae discovered by GRL and supports our theoretical analysis.

 In Appendix \ref{app : countedge}, we derive the cycle-counting formulae based on the work of \cite{voropaev2012number}, using the relation $C_{l+1} = A\odot P_l$. Additionally, we provide a detailed explanation of how $P_{l}^*$ counts $l$-path establishing a new methodology for deriving formulae. 

Since $\WL 3$ cannot count the $7$-paths \citep{furer2017combinatorial}, theorem \ref{thm:g3} leads to the incapability for $G_3$ to count it either. To go beyond the 6-paths counting, a more expressive grammar is needed.

In Appendix \ref{app : directed}, we evaluate GRL on directed graphs and compare its performance to baseline algorithms. GRL stands out as the only approach capable of discovering novel matrix formulae for counting paths of lengths 4 to 5 in directed graphs.

\section{Conclusion}\label{sec: 6}

This paper introduces Gramformer, a deep learning architecture that learns a policy and a value function within a CFG/PDA framework, by assigning tokens to elements of the transition function of a PDA.  
Used within the GRL algorithm, it effectively addresses the question 
"Can a deep learning algorithm discover efficient set of sentences for a given task".

Instantiated over the grammar $G_3$ to solve the path counting problem, GRL provides efficient formulae that are linear combinations of sets of sentences in $L(G_3)$. These formulae of enhanced computational efficiency by factors ranging from 2 to 6.25 demonstrate the ability of GRL to not only discover explicit formulae for counting paths, but also to provide new ways of designing such formulae.
%capability of GRL, not only of discovering explicit formulae for path counting but also of providing new way of designing such formulae. 

For paths longer than 6, future research should aim to characterize $\WL k$ CFGs to bypass the theoretical limit on path counting of $G_3$. Such a characterization will enable the application of GRL to uncover more explicit formulae for substructure counting across graph structures.

Moreover, applying GRL to real-world datasets to derive formulae for various tasks represents a promising direction for future exploration as the grammar provides a link to substructures and thus interpretability.

This approach could potentially improve the applicability and effectiveness of GRL in practical scenarios, thereby broadening its impact.

\newpage

\bibliography{{./Biblio/biblio}}

\newpage
\appendix

This document provides additional content to the main paper.
\section{CFGs and PDAs}\label{app:cfg}
This section provides the proof of theorem \ref{thm:g3} of Section \ref{sec: 2} and more details about PDA.

Even if $G_3$ is different from the $\WL 3$ CFG proposed in \cite{piquenot2024grammatical}, they share the same expressive power. Indeed, in the context of this paper, we are not limited by the depth of the CFG while the goal of the grammar reduction in \cite{piquenot2024grammatical} was to keep the expressiveness of the CFG at a given depth. 

It is important to note that in a separative point of view, we separate graphs with scalar, a CFG $G$ separates two graphs $\cur G_1$ and $\cur G_2$ if there exists a sentence $s\dans L(G)$ such that $s(A_{\cur G_1}) \neq s(A_{\cur G_2})$. Knowing that, we have the following proposition and theorem relative to the expressive power of $G_3$.
\begin{prop}\label{prop:sumsentence}
    Assume we have a sentence $s$ that is the sum of two sentences $s_1$ and $s_2$. If $s$ separates $\cur G_1$ and $\cur G_2$, then it is necessary that $s_1$ or $s_2$ separate $\cur G_1$ and $\cur G_2$.
\end{prop}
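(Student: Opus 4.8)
The plan is to prove the contrapositive, which reduces the statement to an immediate consequence of linearity. Recall that, under the separative convention fixed just above the proposition, a sentence $s$ separates $\cur G_1$ and $\cur G_2$ precisely when $s(A_{\cur G_1}) \neq s(A_{\cur G_2})$. The hypothesis that $s$ is the sum of $s_1$ and $s_2$ I would read as an identity of matrix-valued maps, so that for every adjacency matrix $A$ one has $s(A) = s_1(A) + s_2(A)$; this pointwise additivity of evaluation is the only structural fact the argument uses.

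Toward the contrapositive, I would assume that neither $s_1$ nor $s_2$ separates $\cur G_1$ and $\cur G_2$, that is, $s_1(A_{\cur G_1}) = s_1(A_{\cur G_2})$ and $s_2(A_{\cur G_1}) = s_2(A_{\cur G_2})$. Adding these two matrix equalities and invoking $s = s_1 + s_2$ then gives
\[
s(A_{\cur G_1}) = s_1(A_{\cur G_1}) + s_2(A_{\cur G_1}) = s_1(A_{\cur G_2}) + s_2(A_{\cur G_2}) = s(A_{\cur G_2}),
\]
so $s$ fails to separate $\cur G_1$ and $\cur G_2$. Contraposing this implication yields exactly the claim: if $s$ separates $\cur G_1$ and $\cur G_2$, then at least one of $s_1$, $s_2$ must separate them.

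Since the computation collapses to a single chain of equalities, there is no genuine obstacle in the calculation itself. The only point that needs care is the justification that evaluation is additive, i.e. that ``$s$ is the sum of $s_1$ and $s_2$'' means $s(A) = s_1(A) + s_2(A)$ for every graph rather than any purely syntactic combination; I would state this reading explicitly at the start so the one-line argument is unambiguous. I would also remark that a trivial induction extends the statement to any finite sum $s = \sum_i s_i$ — if $s$ separates $\cur G_1$ and $\cur G_2$ then some summand $s_i$ does — which is the form actually needed when reasoning about the Voropaev-style linear combinations of sentences in $L(G_3)$.
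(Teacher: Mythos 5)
Your proof is correct and is essentially the paper's own argument: the same one-line chain of equalities $s(A_{\cur G_1}) = s_1(A_{\cur G_1}) + s_2(A_{\cur G_1}) = s_1(A_{\cur G_2}) + s_2(A_{\cur G_2}) = s(A_{\cur G_2})$, merely phrased as a contrapositive rather than the paper's proof by contradiction, which is an immaterial difference. Your added remarks (making pointwise additivity explicit, and the inductive extension to finite sums) are sound and harmless, though not part of the paper's proof.
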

\begin{proof}
    Assume for the sake of contradiction that neither $s_1$ nor $s_2$ can separate $\cur G_1$ and $\cur G_2$. Then 
    \begin{align*}
        s(A_{\cur G_1}) &= s_1(A_{\cur G_1}) + s_2(A_{\cur G_1}) \\
        &= s_1(A_{\cur G_2}) + s_2(A_{\cur G_2}) = s(A_{\cur G_2}).
    \end{align*}
    That is absurd.
\end{proof}

\begin{thm}[$\WL 3$ CFG]
$G_3$ is as expressive as $\WL 3$
\end{thm}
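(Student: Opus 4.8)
The plan is to establish the expressive equivalence by proving two directions of separability: that any two graphs separated by $\WL 3$ are separated by some sentence of $L(G_3)$, and conversely that any two graphs separated by a sentence of $L(G_3)$ are also distinguished by $\WL 3$. The natural bridge is the MATLANG fragment $\cur L_3 = \{\cdot, \transpose, \mathrm{diag}, \onevector, \odot\}$ established by \cite{Geerts}, which is known to distinguish exactly the same graph pairs as $\WL 3$ when applied to adjacency matrices. So rather than reasoning about $\WL 3$ directly, I would reduce the problem to showing that $G_3$ and $\cur L_3$ separate the same pairs of graphs, and then invoke the result of \cite{Geerts} to conclude equivalence with $\WL 3$.

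For the direction that $G_3$ is no more expressive than $\cur L_3$, I would first observe that every sentence $s \in L(G_3)$ is built from $A$, $\identite$, and $\J$ using only matrix multiplication and the Hadamard product $\odot$. The key step is to verify that each of these ingredients can be expressed within $\cur L_3$: matrix multiplication and $\odot$ are operations of $\cur L_3$ directly, $A$ is the input adjacency matrix, $\identite$ is obtainable as $\mathrm{diag}(\onevector)$, and $\J$ can be written as $\onevector\onevector^{\transpose} - \identite$ (using that $\J$ is the all-ones matrix off the diagonal). Consequently every $s \in L(G_3)$ corresponds to an $\cur L_3$ expression computing the same matrix, so any pair separated by $G_3$ is separated by $\cur L_3$.

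For the converse direction, that $\cur L_3$ is no more expressive than $G_3$, I would argue that the matrices produced by $L(G_3)$ span a sufficiently rich space that any separation achievable in $\cur L_3$ is already achievable by a linear combination of sentences of $L(G_3)$; here Proposition \ref{prop:sumsentence} becomes crucial, since it guarantees that whenever a sum of sentences separates two graphs, at least one individual sentence does so as well. Thus it suffices to show that every $\cur L_3$ expression can be rewritten as a linear combination of sentences in $L(G_3)$, the point being that the $\cur L_3$ operations $\mathrm{diag}$ and $\onevector$, which are not primitives of $G_3$, can be emulated: $\mathrm{diag}$-type diagonal extraction is captured by Hadamard products with $\identite$, and vector-level contractions using $\onevector$ can be re-expressed through multiplication by $\J$ or $\identite$ combined with the available products, up to a linear combination.

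The main obstacle will be this converse emulation, namely faithfully simulating the $\cur L_3$ operations involving $\onevector$ and $\mathrm{diag}$ within a matrix-only grammar whose sole off-diagonal all-ones resource is $\J$. The difficulty is that $\cur L_3$ can form genuine vectors and diagonal matrices as intermediate objects, whereas $G_3$ only ever manipulates $n \times n$ matrices; I would need to carefully track how diagonal information and row/column sums propagate through compositions and confirm that each such intermediate computation can be absorbed into a linear combination of full-matrix sentences without loss of separating power, leaning on Proposition \ref{prop:sumsentence} to reduce sums back to single sentences. Once this bookkeeping is complete in both directions, the chain $G_3 \equiv \cur L_3 \equiv \WL 3$ yields the theorem.
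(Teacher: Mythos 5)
Your overall plan follows the same route as the paper: the paper also proves the theorem by reducing to a known $\WL 3$-equivalent matrix language and showing it can be simulated inside $G_3$ --- it uses the reduced CFG (\ref{eq:reducedCFG3}) of \cite{piquenot2024grammatical} (itself the grammatical counterpart of the MATLANG fragment $\cur L_3$ of \cite{Geerts} that you invoke), and it leans on Proposition \ref{prop:sumsentence} exactly as you propose. Your first direction (every $G_3$ sentence is an $\cur L_3$ expression) is routine, modulo one detail: $\J = \onevector\onevector^{\transpose} - \identite$ uses a subtraction that is not literally among the operations of $\cur L_3$, so you would need to argue closure under linear combinations (expanding every occurrence of $\J$ by bilinearity of $\cdot$ and $\odot$ and then applying Proposition \ref{prop:sumsentence} to the resulting sum handles this).

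The genuine gap is in the converse direction, which you yourself call ``the main obstacle'': you assert that $\diag$ and the vector-valued intermediates built from $\onevector$ ``can be absorbed into a linear combination of full-matrix sentences,'' but you give no mechanism, and that mechanism is the entire mathematical content of the theorem. The paper supplies it as an induction on vector subexpressions (those of the form $M_1\cdots M_k\onevector$) driven by explicit identities: the splitting $Nw = (N\odot \identite)w + (N\odot \J)w$, whose two summands may be treated separately thanks to Proposition \ref{prop:sumsentence}; the base case $\identite = \diag \onevector$; and the two rewriting identities $(N\diag w)\odot \identite = \diag{(N\odot \identite)w}$ and $(N\J\diag w)\odot \identite = \diag{(N\odot \J)w}$, which express the $\diag$ of a longer vector expression through matrix products, $\J$, $\identite$ and the $\diag$ of a shorter one, so that induction eliminates every occurrence of $\onevector$ and $\diag$ while preserving separating power. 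Until you produce these identities (or equivalents), your ``bookkeeping'' paragraph restates the problem rather than solving it: nothing in your text shows that a term such as $\diag{(N\odot \J)w}$, with $w$ itself a nested vector expression, ever lands inside the span of $L(G_3)$.
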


\begin{proof}

\begin{align}\label{eq:reducedCFG3}
V_c &\vers  MV_c \ | \ \onevector  \\ \nonumber
M &\vers (M\odot M) \ | \ (MM) \ | \ \diag {V_c} \ | \ A .
\end{align}
    We will start from the CFG (\ref{eq:reducedCFG3}) that was proven to be $\WL 3$ equivalent. We show that $V_c$ variable and $\diag{V_c}$ can be removed.
    
    First of all, we have that for any matrix $N$ and vector $w$, $Nw = (N\odot \identite)w + (N\odot \J)w$, since a sentence in the CFG (\ref{eq:reducedCFG3}) consists on a sum other the resulting vector, we have with the help of proposition \ref{prop:sumsentence} that vectors $(N\odot \identite)w$ and $(N\odot \J)w$ have a better separability than $Nw$.
    To remove $V_c$ variable, we first have $\identite = \diag \onevector$. Then for any matrix $N$ and vector $w$, we have that $(N\diag w\J)\odot \identite = \diag{(N\odot \J)w}$ and $(N\diag w)\odot \identite = \diag{(N\odot \identite)w}$.
    \begin{align*}
        (N\diag w\J)_{i,i} &= \sum_{l,m} N_{i,l}\diag w _{l,m} \J_{m,i} \\
            &= \sum_{l} N_{i,l}w _{l} \J_{l_i} \\
            &= \sum_{l} N_{i,l}\J_{i_l}w _{l}  \\
            &= \sum_{l} (N\odot \J)_{i,l}w_l = ((N\odot \J)w)_i,
    \end{align*}
    \begin{align*}
        (N\diag w)_{i,i} &= \sum_{l} N_{i,l}\diag w _{l,i} \\
            &= N_{i,i}w _{i} \\
            &= \sum_{l} (N\odot \identite)_{i,l}w_l = ((N\odot \identite)w)_i
    \end{align*}
    The conclusion can be made by induction. We obtain $G_3$ as expressive as $\WL 3$.
\end{proof}

To give more insight in the construction of PDA from CFG, consider the PDA $D_3$, which corresponds to the CFG $G_3$:

\begin{align*}
    D_3 = (\{q_0,q_1\},\{A,\identite,\J,(,),\odot\}, \{M,A,\identite,\J,(,),\odot\},\delta,q_0,M,{q_1}).
\end{align*}
where the transition relation $\delta$ is defined as follows:
\begin{align*}
    \delta(q_0,\varepsilon,M) &= \{(q_0,(M\odot M)),(q_0,(MM)),(q_0,A),(q_0,\identite), (q_0,\J)  \} \\
    \delta(q_0,A,A) &= \delta(q_0,\identite,\identite) = \delta(q_0,\J,\J) = \delta(q_0,(,()  = \delta(q_0,),)) = \delta(q_0,\odot,\odot) = \{(q_0,\varepsilon)\} \\
    \delta(q_0,\varepsilon,\varepsilon) &= \{(q_1,\varepsilon)\}.
\end{align*}
In the same way that production rules fully defines a CFG, the transition relation $\delta$ completely specifies a PDA. For a PDA constructed from a CFG, the transposition transitions alone are sufficient to define the automaton. For instance, $D_3$ can be fully described by the transition: 
\begin{align*}
    \delta(q_0,\varepsilon,M) &= \{(q_0,(M\odot M)),(q_0,(MM)),(q_0,A),(q_0,\identite), (q_0,\J)  \},
\end{align*}
which corresponds directly to the production rules of $G_3$.

\section{On the evaluation of GRL in the context of path counting}\label{app:eval}
We remind the acting phase of GRL described in section \ref{sec: 4}. In GRL, an agent generates a set of sentences, $\cur S$, using a pushdown automaton corresponding to a given CFG. For a given set of graphs, the agent computes the results of each sentence in $s$. A linear combination of these computed results is then derived and compared to the ground truth path counts, which yields a reward $R_s$. This section aims to detailed this evaluation process in the case of GRL applied to $G_3$.

In the case of $G_3$, the set of computed sentences for a given set of graphs results into a set of matrices for each sentence. We have then $s$ sets of $g$ matrices, where $s$ is the number of sentences and $g$ the number of graphs. Along with this, we have a set of $g$ matrices of ground truth. For the sake of explanation, we assume that each graphs have the same size $n$. Then we chose $s$ indices $\iota_1,\cdots,\iota_s$ and a graph $\cur G$ such that, the matrix $E$ of size $s\times s$, where $E_{i,j} = s_j(A_{\cur G})_{\iota_i} $, is invertible. If such a matrix does not exist, we penalise the set of sentences by attributing a negative value. Along with the construction of $E$, we define the vector $v$ with the ground truth matrix of $\cur G$, $T_{\cur G}$ by $v_i = (T_{\cur G})_{\iota_i}$.

The linear combination is then obtained by resolving the equation $Ex = v$. Then, the linear combination $\sum_i x_is_i(A_{\cur G})$ is compared to the ground truth $T_{\cur G}$ for all graphs $\cur G$ resulting in the value $r_{\cur S}$. This value encompasses the pertinence of the set of sentence $\cur S$ over a specific path counting problem. Figure \ref{fig:evalscheme} depicts this evaluation procedure.

\begin{figure}
    \centering
    \includegraphics[width = \textwidth]{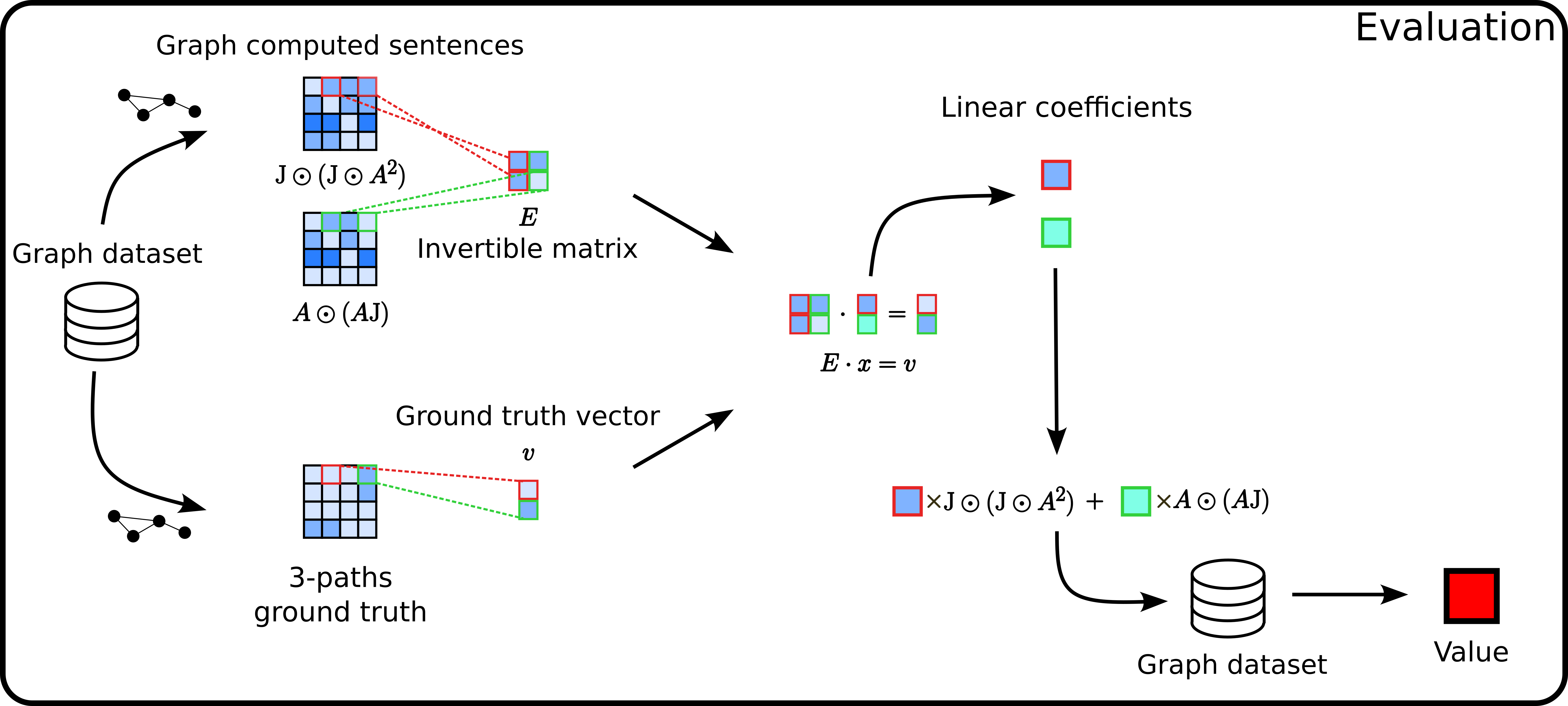}
    \caption{The evaluation process of GRL for path counting involves deriving an invertible matrix $E$ from the computed sentences corresponding to a given graph, alongside a ground truth vector $v$. The solution of the equation $Ex=v$ provides a linear combination of the computed sentence results. This linear combination is then compared to the ground truth across the entire graph dataset, yielding a value that reflects the effectiveness and relevance of the set of sentences in solving the path counting problem.}
    \label{fig:evalscheme}
\end{figure}

The existence of \cite{voropaev2012number} formulas, ensure that there exist a linear combination of sentences that addresses the path counting of length up to 6.

\section{A CFG to count at edge level}\label{app:g3tilde}
In our investigation of substructure counting at the edge level for the grammar $G_3$, we focus on the non-diagonal elements of the involved matrices. To streamline this process, we introduce an alternative context-free grammar, denoted as $\tilde{G}_3$, which is equally expressive as $G_3$ but specifically tailored for edge-related computations. The grammar is defined as follows:
\begin{align}\label{eq : g3 edge}
E &\rightarrow (E\odot M) \ | \ (NE) \ | \ (EN) \ | \ A \  | \ \J \\
N &\rightarrow (N \odot M) \ | \ (N\odot N) \ | \ \identite \nonumber\\
M &\rightarrow (MM) \ | \ (EE) \nonumber
\end{align}

In $\tilde{G}_3$, the variable $E$ represents matrices with zero on the diagonal, corresponding to edges in the graph, while $N$ represents diagonal matrices, corresponding to nodes, and $M$ represents general matrices. The start variable is $E$ as we aim to focus on edge-level structures. The production rules for each variable describe valid operations and combinations within $G_3$ that yield matrices corresponding to that variable.

In the case of $N$, matrix multiplication is omitted because, for diagonal matrices, the matrix product behaves like the Hadamard product. This choice reduces computational complexity without sacrificing expressiveness.

\begin{figure}
    \centering
    \includegraphics[width = \textwidth]{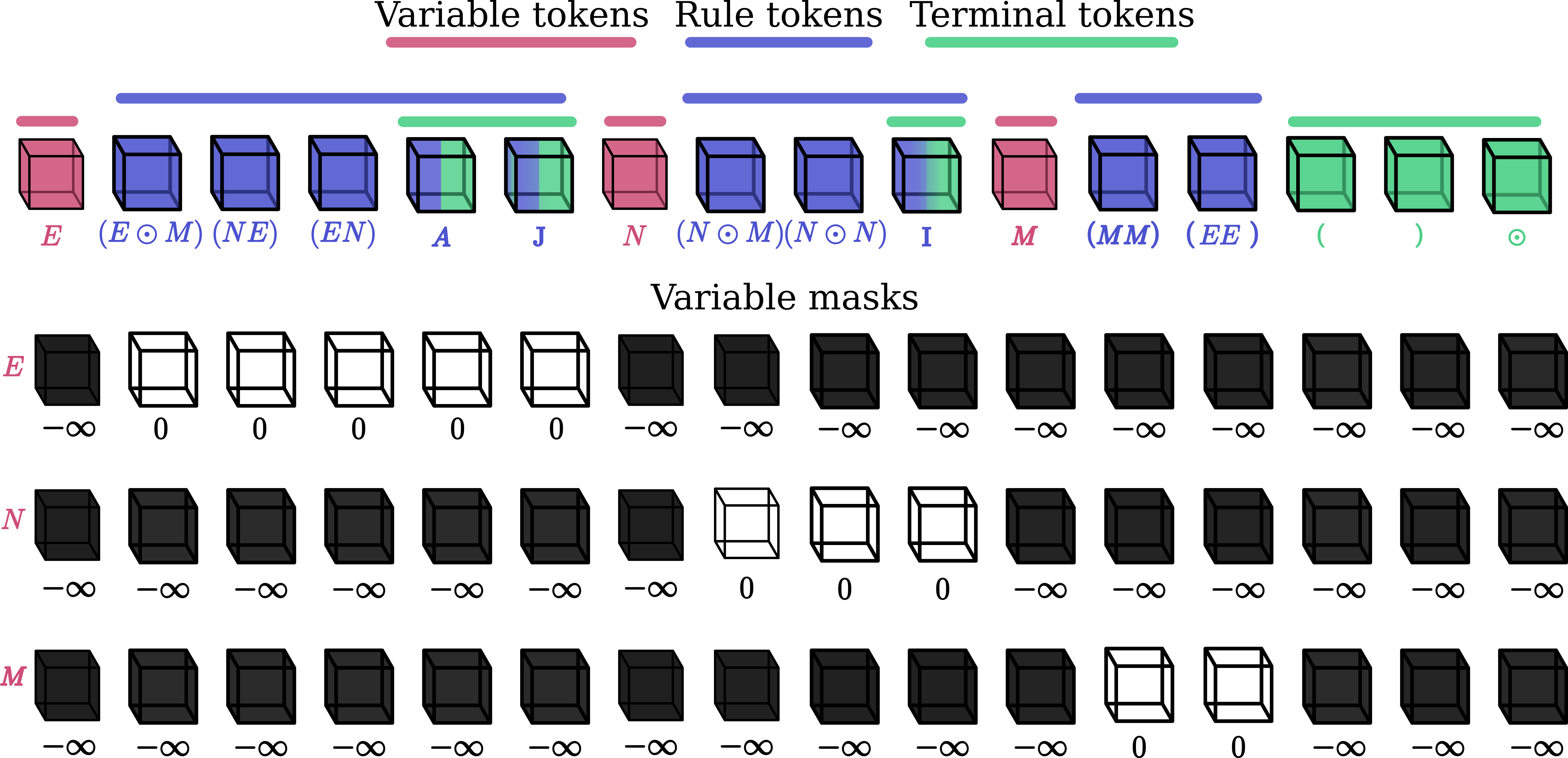}
    \caption{At the top, the tokens of the Graformer derived from $\tilde G_3$ are separated into three sets. Below, the variable masks of Gramformer are defined to correspond to the rule tokens of their corresponding variable token.}
    \label{fig:tokengram}
\end{figure}

\begin{figure}
    \centering
    \includegraphics[width = \textwidth]{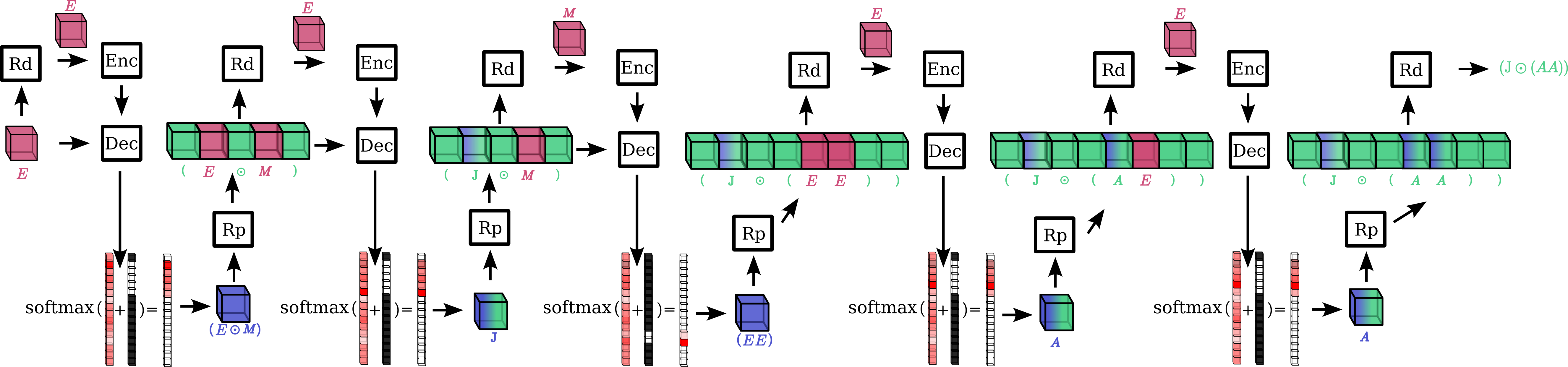}
    \caption{Generation of the sentence $\J \odot A^2 \in L(\tilde G_3)$, following a PDA procedure guide by a Gramformer policy.}
    \label{fig:sentencegeneration}
\end{figure}

\section{Path and cycle counting}\label{app : countedge}
This section contains the proof of theorem \ref{thm:path} of Section \ref{sec: 5} and provides a detailed explanation of how $P_{l}^*$ counts $l$-path.  In the following, all graphs are assumed to be simple, i.e., they contain no self-loops. This assumption aligns with the search for paths, as self-loops cannot contribute to any path due to the repetition of a node when traversing through a self-loop. Consequently, the adjacency matrix \( A \) will always have a zero diagonal. Furthermore, this ensures that $ \J \odot A = A $.

As GRL found more efficient formulas to calculate paths and cycles at edge-level in a graph, we tried to prove that such formulas are correct, and by doing so, we found the following lemma that helps to reduce the computation cost.
\begin{lem}\label{lem : simplemultiplication}
    Let $N$,$M$ and $P$ be square matrices of the same size, such that $N_{i,i} = \sum_k M_{i,k}$ for all indices $i$. Then we have
    \begin{align*}
        P\odot(M\J) = (\identite \odot N)P - P\odot M
    \end{align*}
\end{lem}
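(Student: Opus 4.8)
The plan is to establish the identity by a direct entrywise comparison of the two sides, since every operation involved (the Hadamard product $\odot$, the right-multiplication by $\J$, and the diagonal extraction $\identite \odot N$) acts transparently at the level of matrix entries. I would fix indices $i,j$, compute the $(i,j)$ entry of each side, and let the hypothesis $N_{i,i} = \sum_k M_{i,k}$ be the single ingredient that forces the two to agree.

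First I would handle the left-hand side. The key observation is that $\J$ carries ones off the diagonal and zeros on it, so
\[
(M\J)_{i,j} = \sum_k M_{i,k}\,\J_{k,j} = \sum_{k\neq j} M_{i,k} = \Big(\textstyle\sum_k M_{i,k}\Big) - M_{i,j}.
\]
Taking the Hadamard product with $P$ then gives $[P \odot (M\J)]_{i,j} = P_{i,j}\big(\sum_k M_{i,k} - M_{i,j}\big)$.

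Next I would treat the right-hand side. Since $\identite \odot N$ is the diagonal matrix carrying $N_{i,i}$ in position $(i,i)$, ordinary matrix multiplication gives $[(\identite \odot N)P]_{i,j} = N_{i,i}\,P_{i,j}$, while $[P \odot M]_{i,j} = P_{i,j}\,M_{i,j}$. Hence the $(i,j)$ entry of the right-hand side is $P_{i,j}(N_{i,i} - M_{i,j})$. Substituting the hypothesis $N_{i,i} = \sum_k M_{i,k}$ makes this coincide with the expression obtained for the left-hand side, which completes the argument since $i,j$ were arbitrary.

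There is no genuine obstacle here: the proof is purely computational. The only point requiring care is the bookkeeping around the zero diagonal of $\J$, which is precisely what produces the $-M_{i,j}$ correction term and thereby matches the $-P\odot M$ term on the right; correspondingly, the row sum $\sum_k M_{i,k}$ is exactly what the diagonal of $N$ supplies through the hypothesis. Recognizing this correspondence is the whole content of the lemma.
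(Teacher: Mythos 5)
Your proof is correct and follows essentially the same route as the paper's: both sides are compared entrywise, with the zero diagonal of $\J$ producing the $-M_{i,j}$ correction on the left and the hypothesis $N_{i,i} = \sum_k M_{i,k}$ closing the gap. Your write-up is, if anything, slightly more explicit than the paper's in spelling out the expansion of $(M\J)_{i,j}$, but there is no substantive difference.
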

\begin{proof}
    We have
    \begin{align}
        (P\odot(M\J))_{i,j} &= P_{i,j}(\sum_k M_{i,k} - M_{i,j}) \nonumber\\
        &= P_{i,j}N_{i,i} - P_{i,j}M_{i,j} \label{eq : lemsimple1},
    \end{align}
    and
    \begin{align}
        ((\identite \odot N)P - P\odot M)_{i,j} &= \sum_k (\identite \odot N)_{i,k}P_{k,j} - P_{i,j}M_{i,j} \nonumber\\
        &= N_{i,i}P_{i,j} - P_{i,j}M_{i,j} \label{eq : lemsimple2}.
    \end{align}
    From equations (\ref{eq : lemsimple1}) and (\ref{eq : lemsimple2}), we can conclude.
\end{proof}

\paragraph{$2$-paths and $3$-cycles}

The most effective explicit formula discovered to date for calculating the number of 2-paths connecting two nodes was proposed by \cite{voropaev2012number}, it is 
\begin{align}\label{eq : 2path}
P_2 = \J \odot A^2 .
\end{align}

Following the formula for the $l$-cycle proposed in \cite{voropaev2012number}, we obtain for the $3$-cycle the following formula
\begin{align}\label{eq : 3cycle}
C_3 = A\odot P_2 =  A \odot A^2 .
\end{align}
Without any surprise, our architecture found the same formulas for both $2$-path and $3$-cycle.

\paragraph{$3$-paths and $4$-cycles}

The most effective explicit formula discovered to date for calculating the number of 3-paths connecting two nodes was proposed by \cite{voropaev2012number}, it is 
\begin{align}\label{eq : 3path}
P_3 = \J \odot A^3 - (\identite \odot A^2)A - A(\identite \odot A^2) + A .
\end{align}

Following the formula for the $l$-cycle proposed in \cite{voropaev2012number}, we obtain for the $3$-cycle the following formula
\begin{align}\label{eq : 4cycle}
C_4 = A\odot P_3 =  A \odot A^3  - A(\identite \odot A^2) - (\identite \odot A^2)A + A .
\end{align}
Obviously our architecture found $P_3$, but surprisingly, it found a more compact formula. 
\begin{thm}
The following formula, denoted as $P_{3}^*$, computes the number of $3$-paths linking two nodes
\begin{align}\label{eq : 3pathreduct}
 P_{3}^* &= \J \odot(A(\J \odot A^2)) -  A\odot(A\J). 
\end{align}    
\end{thm}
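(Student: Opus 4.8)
The plan is to prove the stated identity by showing $P_{3}^* = P_3$, where $P_3$ is Voropaev's formula (equation (\ref{eq : 3path})) whose correctness is already established. Since $(P_3)_{i,j}$ is known to count the $3$-paths from $i$ to $j$, the equality $P_3^* = P_3$ immediately transfers this property to $P_3^*$. The whole argument thus reduces to a matrix identity, which I would prove by rewriting each of the two terms of $P_3^*$ into the four terms appearing in $P_3$.

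First I would handle the term $\J \odot (A(\J \odot A^2))$. Because $\J$ and $\identite$ partition the entries of any matrix into its off-diagonal and diagonal parts, we have $\J \odot A^2 = A^2 - (\identite \odot A^2)$, hence $A(\J \odot A^2) = A^3 - A(\identite \odot A^2)$ and $\J \odot$ distributes over this difference. The key observation is that $A(\identite \odot A^2)$ already has a zero diagonal: since $\identite \odot A^2$ is diagonal and $A$ has zero diagonal, $\bigl(A(\identite \odot A^2)\bigr)_{i,i} = A_{i,i}(A^2)_{i,i} = 0$. Therefore $\J \odot \bigl(A(\identite \odot A^2)\bigr) = A(\identite \odot A^2)$, which yields $\J \odot (A(\J \odot A^2)) = \J \odot A^3 - A(\identite \odot A^2)$, already matching two of the four terms of $P_3$.

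Next I would treat the term $A \odot (A\J)$ by invoking Lemma \ref{lem : simplemultiplication} with $P = M = A$ and $N = A^2$. The hypothesis $N_{i,i} = \sum_k M_{i,k}$ holds because, for an undirected $0/1$ adjacency matrix, $(A^2)_{i,i} = \sum_k A_{i,k}A_{k,i} = \sum_k A_{i,k}$. The lemma then gives $A \odot (A\J) = (\identite \odot A^2)A - A \odot A$, and since $A$ is a $0/1$ matrix we have $A \odot A = A$, so $A \odot (A\J) = (\identite \odot A^2)A - A$. Subtracting this from the expression obtained for the first term gives exactly $P_3^* = \J \odot A^3 - A(\identite \odot A^2) - (\identite \odot A^2)A + A = P_3$, completing the proof.

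I expect the main obstacle to be the careful bookkeeping of which Hadamard-masked products vanish on the diagonal, since the argument rests on three elementary but essential facts about $A$ being a zero-diagonal $0/1$ adjacency matrix: the vanishing of the diagonal of $A(\identite \odot A^2)$, the reduction $A \odot A = A$, and the identity $(A^2)_{i,i} = \deg(i)$ that licenses the choice $N = A^2$ in the lemma. Each of these must be verified precisely so that the two compact terms of $P_3^*$ genuinely reassemble into the four terms of the established formula.
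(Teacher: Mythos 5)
Your proposal is correct and follows essentially the same route as the paper's own proof: both establish $P_3^* = P_3$ by splitting $A^2$ into its diagonal and off-diagonal parts inside the first term (using that $A(\identite \odot A^2)$ has zero diagonal) and by applying Lemma \ref{lem : simplemultiplication} with $N = A^2$, $M = P = A$ to rewrite $A \odot (A\J)$ as $(\identite \odot A^2)A - A$. Your write-up is in fact slightly more explicit than the paper's, spelling out the facts $A \odot A = A$ and $(A^2)_{i,i} = \sum_k A_{i,k}$ that the paper uses implicitly.
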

\begin{proof}
    We will show that $P_3 = P_{3}^*$. Firstly, we have
    \begin{align}
        \J \odot A^3 - A(\identite \odot A^2) &= \J \odot ((A(\J + \identite )\odot A^2)) - A(\identite \odot A^2) \nonumber\\
        &=  \J \odot(A( \J \odot A^2)) + \underbrace{\J \odot (A(\identite \odot A^2))}_{=A(\identite \odot A^2)} - A(\identite \odot A^2) \nonumber\\
        &= \J \odot(A( \J \odot A^2)) \label{eq : equality}.
    \end{align}
    Secondly, we have that $A^2_{i,i}=\sum_k A_{i,k}$. Thus lemma \ref{lem : simplemultiplication} implies
    \begin{align}
        (\identite \odot A^2)A - A  &= A\odot(A\J) \label{eq : equality1}.
    \end{align}
    From equality $(\ref{eq : equality})$ and $(\ref{eq : equality1})$, we can conclude.
\end{proof}

An alternative understanding of how $P_{3}^*$ computes the number of 3-paths connecting two nodes is illustrated in Figure \ref{fig : 3path}. The process can be described as follows:

The expression $A(\J \odot A^2)$ calculates, from a given node, a non-closed 2-path followed by a 1-path. This computation inherently includes non-closed 3-paths as well as 3-cycles. The 3-cycles are subsequently removed by the Hadamard multiplication with 
$\J$, which zeroes out the diagonal elements. However, this operation also allows the possibility of traversing a 2-path and then returning to the intermediate node. To account for this and eliminate such paths, we subtract the term $A\odot(A\J)$.
\begin{figure}
    \centering
    \includegraphics[width = \textwidth]{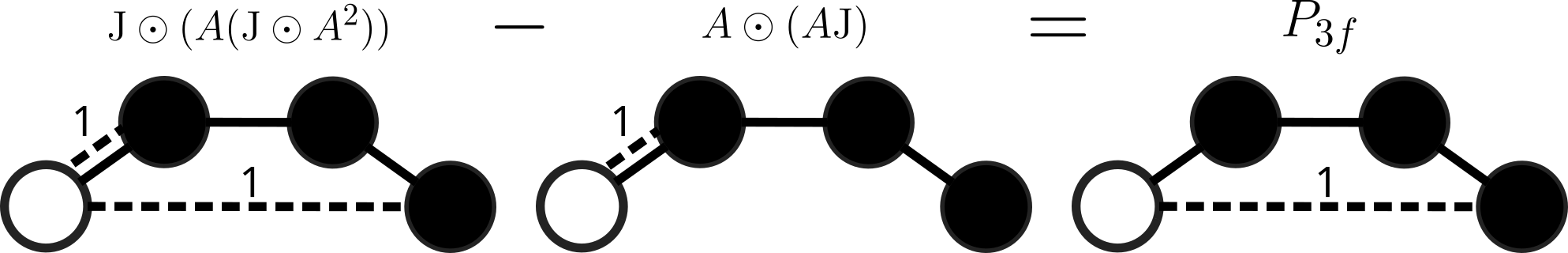}
    \caption{computation of $P_{3}^*$ for an example graph, the dashed lines indicate an entry in the column corresponding to the white node in the matrix associated with the term specified above.}
    \label{fig : 3path}
\end{figure}

Thanks to formula $(\ref{eq : 3pathreduct})$, we can derive the $4$-cycle formula.
\begin{corol}
  The following formula, denoted as $C_{4f}$ computes the number of $4$-cycles linking two nodes
\begin{align}\label{eq : 3cyclereduct}
 C_{4f} &= A \odot(A(\J \odot A^2)) -  A\odot(A\J). 
\end{align}      
\end{corol}

In terms of time complexity, $P_{3}^*$ is more efficient than $P_3$. The ratio of time complexity of $P_{3}^*$ over $P_3$ is $\frac 1 2$. It is directly derived from the number of matrix multiplications in both formulas. Figure \ref{fig:3pathcomplexity} shows the gain of complexity of $P_{3}^*$ and the ratio between the two formulas. 

Surprisingly, even for $l = 3$, GRL allows to improve the computation of path and cycle at edge-level in terms of time complexity.

\paragraph{$4$-paths and $5$-cycles}
The most effective explicit formula discovered to date for calculating the number of $4$-paths connecting two nodes was proposed by \cite{voropaev2012number}, it is 
\begin{align}\label{eq : 4path}
P_4 &= \J \odot A^4  - \J \odot (A(\identite \odot A^2 )A)   + 2(\J \odot A^2 ) \\
 &\quad - (\identite \odot A^2)(\J \odot A^2) - (\J \odot A^2)(\identite \odot A^2) \nonumber \\
 &\quad - A(\identite \odot A^3 ) - (\identite \odot A^3 )A + 3 A\odot A^2 \nonumber .
\end{align}

Following the formula for the $l$-cycle proposed in \cite{voropaev2012number}, we obtain for the $5$-cycle the following formula
\begin{align}\label{eq : 5cycle}
C_5 = A\odot P_4 &=  A \odot A^4  - A \odot (A(\identite \odot A^2 )A)  \\
 &\quad - (\identite \odot A^2)(A \odot A^2) - (A \odot A^2)(\identite \odot A^2)\nonumber \\
 &\quad - A(\identite \odot A^3 ) - (\identite \odot A^3 )A + 5 A\odot A^2 \nonumber .
\end{align}

Again, GRL found an improved formula for $l=4$. 
\begin{thm}
The following formula, denoted as $P_{4}^*$, computes the number of $4$-paths linking two nodes
\begin{align}\label{eq : 4pathreduct}
 P_{4}^* &= \J\odot (A(\J \odot(A(\J \odot A^2)))) -  \J \odot (A(A\odot(A\J)))  \\
 &\quad - \J \odot ((A\odot(A\J))A) - A\odot((A \odot A^2)\J) + 2A \odot A^2. \nonumber
\end{align}    
\end{thm}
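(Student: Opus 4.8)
The plan is to prove $P_4 = P_4^*$, following exactly the template established for $P_3$, so that the verified correctness of Voropaev's $P_4$ transfers to $P_4^*$. The strategy has two recurring moves. The first is the \emph{diagonal-splitting identity} used in equation (\ref{eq : equality}): whenever a term $\J \odot (A \cdots A^2 \cdots)$ appears, I write the interior matrix as a sum of its diagonal and off-diagonal parts via $X = (\identite \odot X) + (\J \odot X)$, push the $\J \odot$ through the surrounding multiplications, and observe that terms which already carry a $\J \odot$ on the outside and an $(\identite \odot \cdots)$ factor inside cancel with the corresponding Voropaev term. The second move is \textbf{Lemma \ref{lem : simplemultiplication}}, applied to rewrite each term of the form $P \odot (M\J)$, where $M$ has the row-sum property $M_{i,i} = \sum_k M_{i,k}$; the relevant instance is again $M = A^2$, so that $A^2_{i,i} = \sum_k A_{i,k}$, letting me trade the nested $\odot(\cdots \J)$ constructions in $P_4^*$ for the $(\identite \odot A^2)(\cdots)$ and $(\cdots)(\identite \odot A^2)$ terms in $P_4$.

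The concrete steps, in order, are as follows. First I would expand the leading nested term $\J\odot (A(\J \odot(A(\J \odot A^2))))$ by repeatedly applying the splitting identity from the inside out, using equality (\ref{eq : equality}) as a subroutine to collapse $\J\odot(A(\J\odot A^2))$ back toward $\J\odot A^3 - A(\identite\odot A^2)$ at each nesting level, thereby generating the $\J\odot A^4$ term together with the mixed terms $-\J\odot(A(\identite\odot A^2)A)$ and the $A(\identite\odot A^3)$, $(\identite\odot A^3)A$ contributions of (\ref{eq : 4path}). Second, I would handle the two middle terms $\J\odot(A(A\odot(A\J)))$ and $\J\odot((A\odot(A\J))A)$ by first using Lemma \ref{lem : simplemultiplication} (with $P=A$, $M=A^2$, $N=A^2$) to rewrite $A\odot(A\J) = (\identite\odot A^2)A - A$, substituting this inside, and then simplifying the resulting $\J\odot(A(\cdots)A)$ expressions with the splitting identity. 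Third, I would treat the final term $A\odot((A\odot A^2)\J)$ with Lemma \ref{lem : simplemultiplication}, and finally collect all contributions, checking that the diagonal terms produce exactly the $+2(\J\odot A^2)$ and $+3A\odot A^2$ scalar-coefficient terms of $P_4$ after the constant $+2A\odot A^2$ of $P_4^*$ is accounted for.

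The main obstacle I anticipate is the bookkeeping in the first step: the doubly nested $\J\odot(A(\J\odot(A(\J\odot A^2))))$ produces a proliferation of cross terms when each $\J\odot$ is distributed, and one must track carefully which diagonal-masked pieces survive and which cancel against the surrounding $\J$-masks (as happened cleanly in the underbraced cancellation of (\ref{eq : equality})). In particular, the interaction between an inner $(\identite\odot A^2)$ produced by splitting and an \emph{outer} $\J\odot$ does not always vanish when there is an intervening factor of $A$ on the right, since $\J\odot (A (\identite\odot A^2) A)$ genuinely contributes rather than cancelling — distinguishing these surviving mixed terms from the cancelling ones is where errors are most likely. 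I expect the coefficient accounting on the diagonal-correction terms ($2$ versus $3$ versus the cycle-count shift) to require the most care, since these arise from several independent applications of Lemma \ref{lem : simplemultiplication} whose $-P\odot M$ remainders must be summed consistently. Once $P_4 = P_4^*$ is established, the theorem follows immediately because $P_4$ is Voropaev's verified formula, and the same argument structure covers $P_5^*$ and $P_6^*$ with deeper nesting.
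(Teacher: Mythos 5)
Your strategy is essentially the paper's own proof: the same recursive diagonal splitting $A^k = (\identite\odot A^k) + (\J\odot A^k)$ relating the nested leading term to $\J\odot A^4$ (the paper's equality (\ref{eq : equalityp4})), the same substitution $A\odot(A\J) = (\identite\odot A^2)A - A$ inside the two middle terms, and the same application of Lemma \ref{lem : simplemultiplication} via $A^3_{i,i} = \sum_k (A\odot A^2)_{i,k}$ to the term $A\odot((A\odot A^2)\J)$, so the proposal is sound. Two cosmetic slips only: that substitution is the lemma with $P=A$, $M=A$, $N=A^2$ (not $M=A^2$), and expanding the leading nested term actually produces $-(\J\odot A^2)(\identite\odot A^2) - A(\identite\odot A^3)$, while $\J\odot(A(\identite\odot A^2)A)$ and $(\identite\odot A^3)A$ arise respectively from the middle term $\J\odot(A(A\odot(A\J)))$ and from the lemma — bookkeeping misattributions that correct themselves in execution.
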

\begin{proof}
    We will show that $P_{4}^* = P_4$. Firstly, we have
    \begin{align}
        \J \odot A^4  &= \J \odot ((A((\J + \identite )\odot A^3)))  \nonumber\\
        &=  \J \odot(A( \J \odot A^3)) + \underbrace{\J \odot (A(\identite \odot A^3))}_{ = A(\identite \odot A^3)} \nonumber\\
        &=  \J \odot(A( \J \odot (A((\J + \identite)\odot A^2)))) + A(\identite \odot A^3)  \nonumber\\
        &=  \J \odot(A( \J \odot (A(\J\odot A^2)))) + \underbrace{\J \odot(A( \J \odot (A(\identite\odot A^2))))}_{=(\J \odot A^2)(\identite \odot A^2)}  + A(\identite \odot A^3)  \nonumber\\
        &= \J \odot(A( \J \odot (A(\J\odot A^2)))) + (\J \odot A^2)(\identite \odot A^2) + A(\identite \odot A^3) \label{eq : equalityp4}.
    \end{align}
    Secondly, we have from equality (\ref{eq : equality1})
    \begin{align}
        \J \odot ((A\odot(A\J))A) &= \J \odot ((\identite \odot A^2)A - A)A) \nonumber \\
        &=  \underbrace{\J \odot ((\identite \odot A^2)A^2)}_{(\identite \odot A^2)(\J \odot A^2)} - \J \odot A^2 \nonumber \\
        &=  (\identite \odot A^2)(\J \odot A^2) - \J \odot A^2 \label{eq : equalityp41}.
    \end{align}
    Thirdly, we have from equality (\ref{eq : equality1})
    \begin{align}
        \J \odot (A(A\odot(A\J))) &= \J \odot (A(\identite \odot A^2)A - A)) \nonumber \\
        &=  \J \odot (A(\identite \odot A^2)A) - \J \odot A^2 \label{eq : equalityp42}.
    \end{align}
    And eventually, we have $A^3_{i,i} = \sum_k (A\odot A^2)_{i,k}$. Thus lemma \ref{lem : simplemultiplication} implies
    \begin{align}
        (\identite \odot A^3 )A - A\odot A^2 &= A\odot((A \odot A^2)J)\label{eq : equalityp43}.
    \end{align}
    From equality $(\ref{eq : equalityp4})$, $(\ref{eq : equalityp41})$,$(\ref{eq : equalityp42})$ and $(\ref{eq : equalityp43})$, we can conclude.
\end{proof}

An alternative understanding of how $P_{4}^*$ computes the number of 4-paths connecting two nodes is illustrated in Figure \ref{fig : 4path}. The process can be described as follows:

The expression $A(\J \odot(A(\J \odot A^2))) -  A(A\odot(A\J)) = AP_{3}^*$ calculates, from a given node, a non-closed 3-path followed by a 1-path. This computation inherently includes non-closed 4-paths as well as 4-cycles. The 4-cycles are subsequently removed by the Hadamard multiplication with 
$\J$, which zeroes out the diagonal elements. However, this operation also allows the possibility of traversing a 3-path and then returning to an intermediate node. To account for this and eliminate respectively paths returning to the third and second nodes of the $3$-paths, we subtract the terms $\J \odot ((A\odot(A\J))A) -A\odot A^2$ and $ A\odot((A \odot A^2)\J)-A\odot A^2$.
\begin{figure}
    \centering
    \includegraphics[width = \textwidth]{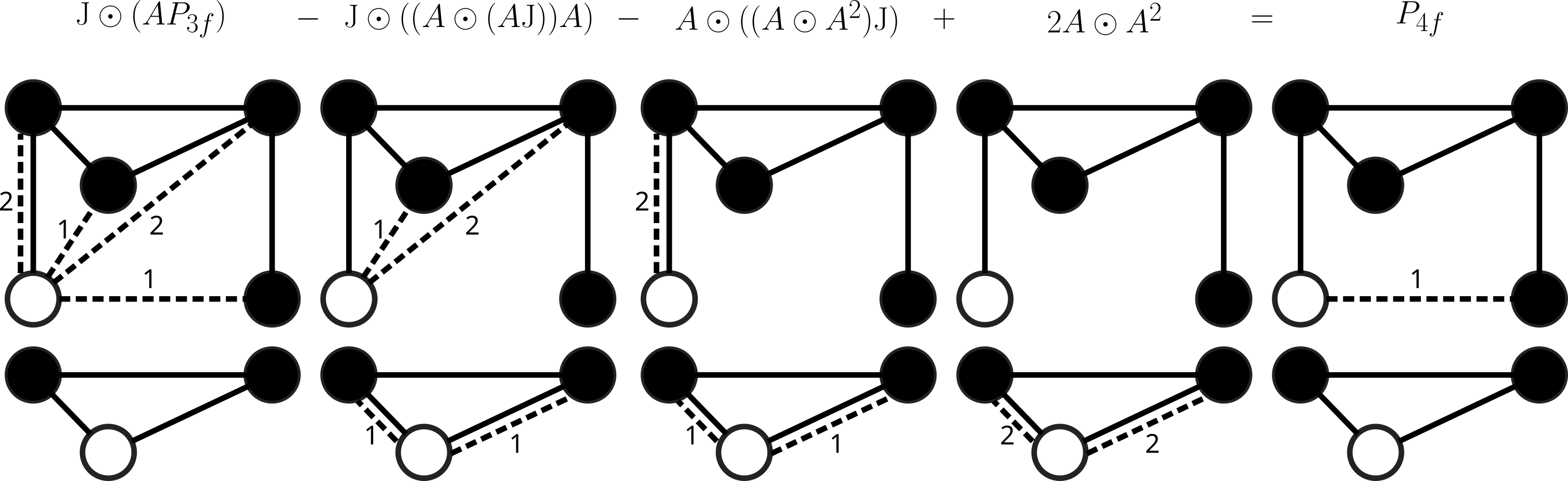}
    \caption{computation of $P_{4}^*$ for two graphs, the dashed lines indicate an entry in the column corresponding to the white node in the matrix associated with the term specified above.}
    \label{fig : 4path}
\end{figure}

Thanks to formula $(\ref{eq : 4pathreduct})$, we can derive the $5$-cycle formula.
\begin{corol}
  The following formula, denoted as $C_{5f}$ computes the number of $5$-cycles linking two nodes
\begin{align}\label{eq : 5cyclereduct}
 C_{5f} &= A\odot (A(\J \odot(A(\J \odot A^2)))) -  A \odot (A(A\odot(A\J)))  \\
 &\quad - A \odot ((A\odot(A\J))A) - A\odot((A \odot A^2)\J) + 2A \odot A^2. \nonumber
\end{align}      
\end{corol}

In terms of time complexity, $P_{4}^*$ is more efficient than $P_4$. The ratio of time complexity of $P_{4}^*$ over $P_4$ is $\frac 4 9$. It is directly derived from the number of matrix multiplications in both formulas. Figure \ref{fig:3pathcomplexity} shows the gain of complexity of $P_{4}^*$ and the ratio between the two formulas. 

GRL improves the computation  of path and cycle at edge-level for $l = 5$.

\paragraph{$5$-paths and $6$-cycles}
The most effective explicit formula discovered to date for calculating the number of $5$-paths connecting two nodes was proposed by \cite{voropaev2012number}, it is 
\begin{align}\label{eq : 5path}
P_5 &= \J \odot A^5 - (\identite \odot A^4)A - A(\identite \odot A^4) - (\identite \odot A^3)(\J\odot A^2) - (\J\odot A^2)(\identite \odot A^3) \\ &\quad - (\identite \odot A^2)(\J\odot A^3) - (\J\odot A^3)(\identite \odot A^2) - \J \odot (A(\identite \odot A^3 )A)   + 3A \odot A^3 \nonumber\\
 &\quad + 2A(\identite \odot A^2)(\identite \odot A^2) + 2(\identite \odot A^2)(\identite \odot A^2)A  + 3A\odot A^2 \odot A^2 + (\identite \odot A^2)A(\identite \odot A^2) \nonumber \\
 &\quad -\J \odot( A(\identite \odot A^2 )A^2) - \J \odot( A^2(\identite \odot A^2 )A) + 3 \J \odot ((A\odot A^2)A) + 3 \J \odot (A(A\odot A^2)) \nonumber \\ 
 &\quad + (\identite \odot(A(\identite\odot A^2)A))A + A(\identite \odot(A(\identite\odot A^2)A)) -6(\identite\odot A^2)A -6A(\identite\odot A^2) \nonumber \\ &\quad -4A\odot A^2 + 3\J \odot A^3 + 4A \nonumber .
\end{align}

Following the formula for the $l$-cycle proposed in \cite{voropaev2012number}, we obtain for the $6$-cycle the following formula
\begin{align}\label{eq : 6cycle}
C_6 &= A \odot A^5 - (\identite \odot A^4)A - A(\identite \odot A^4) - (\identite \odot A^3)(A\odot A^2) - (A\odot A^2)(\identite \odot A^3) \\ &\quad - (\identite \odot A^2)(A\odot A^3) - (A\odot A^3)(\identite \odot A^2) - A \odot (A(\identite \odot A^3 )A)   + 6A \odot A^3  \nonumber\\
 &\quad + 2A(\identite \odot A^2)(\identite \odot A^2) + 2(\identite \odot A^2)(\identite \odot A^2)A  + 3A\odot A^2 \odot A^2 + (\identite \odot A^2)A(\identite \odot A^2) \nonumber \\
 &\quad -A \odot( A(\identite \odot A^2 )A^2) - A \odot( A^2(\identite \odot A^2 )A) + 3 A\odot((A\odot A^2)A) + 3 A\odot (A(A\odot A^2)) \nonumber \\ 
 &\quad + (\identite \odot(A(\identite\odot A^2)A))A + A(\identite \odot(A(\identite\odot A^2)A)) -6(\identite\odot A^2)A -6A(\identite\odot A^2) \nonumber \\ &\quad -4A\odot A^2  + 4A \nonumber .
\end{align}

Again, GRL found an improved formula for $l=5$. 
\begin{thm}
The following formula, denoted as $P_{5}^*$, computes the number of $5$-paths linking two nodes
\begin{align}\label{eq : 5pathreduct}
 P_{5}^* &= \J\odot(AP_{4}^*) - (\J \odot A^2)\odot((A\odot A^2)\J) - A \odot (C_{4f}\J)  -  (A\J)\odot P_{3}^* \\
 &\quad + P_{3}^* + C_{4f}  +2 A\odot A^2\odot A^2  + 3 \J\odot((A \odot A^2)A) - 4 A \odot A^2. \nonumber
\end{align}    
\end{thm}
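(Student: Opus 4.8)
The plan is to prove the identity $P_5 = P_{5}^*$; since $P_5$ of equation (\ref{eq : 5path}) is already established to count $5$-paths, this immediately gives the theorem. I would follow the template of the proofs for $P_{3}^*$ and $P_{4}^*$, reducing every compact block of $P_{5}^*$ to the Voropaev basis of (\ref{eq : 5path}) using only two mechanisms: the diagonal/off-diagonal splitting $N = (N\odot\identite) + (N\odot\J)$ together with the observation (already exploited in (\ref{eq : equality})) that $A$ times a diagonal matrix, and a diagonal matrix times $A$, has zero diagonal and is therefore invariant under $\J\odot\,\cdot\,$; and Lemma \ref{lem : simplemultiplication}, applied through the diagonal identities $A^2_{i,i} = \sum_k A_{i,k}$ and $A^3_{i,i} = \sum_k (A\odot A^2)_{i,k}$ used in (\ref{eq : equality1}) and (\ref{eq : equalityp43}).

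First I would substitute the previously proven equalities $P_{4}^* = P_4$, $P_{3}^* = P_3$ and $C_{4f} = A\odot P_3$ wherever they occur in (\ref{eq : 5pathreduct}), so that $P_{5}^*$ is written entirely in products of $A$ with diagonal factors $\identite\odot A^k$. The genuine work sits in the leading term $\J\odot(AP_{4}^*) = \J\odot(AP_4)$: expanding $AP_4$ and repeatedly pushing the outer $\J\odot$ through each summand via the splitting above produces, on one hand, the bulk Hadamard-with-$\J$ terms of $P_5$ (those beginning with $\J\odot A^5$ and the $\J\odot(A(\identite\odot A^k)A^{m})$ family) and, on the other, a collection of diagonal-correction terms carrying integer coefficients that must later be reconciled.

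The remaining compact blocks are then dispatched one at a time by Lemma \ref{lem : simplemultiplication}. In $(\J\odot A^2)\odot((A\odot A^2)\J)$ the factor $(A\odot A^2)\J$ matches the lemma with $M = A\odot A^2$ and $N = \identite\odot A^3$, converting $\odot(M\J)$ into $(\identite\odot N)P - P\odot M$ of standard terms; in $(A\J)\odot P_{3}^*$ I would take $M = A$ and $N = \identite\odot A^2$ (using $\sum_k A_{i,k} = A^2_{i,i}$); and $A\odot(C_{4f}\J)$ is handled the same way after identifying the relevant row-sum identity (or by first expanding $C_{4f}=A\odot P_3$ into Hadamard terms and splitting each $\,\cdot\,\J$ factor). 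Finally the scalar-multiple terms $P_{3}^* + C_{4f} + 2A\odot A^2\odot A^2 + 3\J\odot((A\odot A^2)A) - 4A\odot A^2$ are rewritten through the same substitutions, after which I would collect like terms across all blocks and check agreement with (\ref{eq : 5path}) term by term.

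The main obstacle is bookkeeping rather than ideas: $P_5$ carries on the order of twenty-five distinct terms with a wide range of integer coefficients (including the delicate $+3A\odot A^2$, $-6(\identite\odot A^2)A$ and $+4A$ contributions), and the recursive expansion of $\J\odot(AP_4)$ together with the Lemma \ref{lem : simplemultiplication} rewrites generates many diagonal-correction terms that must cancel exactly. Each individual step is a routine application of the two mechanisms above, but ensuring that no term is dropped and that every coefficient matches is the error-prone part. I would organise the verification by grouping terms according to their diagonal signature—the number and placement of $\identite\odot A^k$ factors—so that cancellation can be tracked group by group, exactly as the cleaner $P_{4}^*$ argument does on a smaller scale.
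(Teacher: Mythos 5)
Your proposal is correct and follows essentially the same route as the paper's proof: both establish $P_{5}^* = P_5$ by reducing every compact block to Voropaev's basis using the diagonal/off-diagonal splitting $N = (N\odot\identite)+(N\odot\J)$ together with Lemma \ref{lem : simplemultiplication}, instantiated exactly as you describe for $(\J \odot A^2)\odot((A\odot A^2)\J)$, for $(A\J)\odot P_{3}^*$, and for $A\odot(C_{4f}\J)$, where the row-sum identity you leave implicit is $(AP_{3}^*)_{i,i} = \sum_k (C_{4f})_{i,k}$. The only cosmetic difference is that you expand the leading term as $\J\odot(AP_4)$ after substituting $P_{4}^*=P_4$, whereas the paper keeps $P_{4}^*$ in its nested compact form and reuses the intermediate equalities from the $P_{4}^*$ proof; the bookkeeping is equivalent either way.
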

\begin{proof}
    We will show that $P_{5}^* = P_5$. First, we have
    \begin{align}
        \J \odot A^5  &= \J \odot ((A((\J + \identite )\odot A^4)))  \nonumber\\
        &=  \J \odot(A( \J \odot A^4)) + \underbrace{\J \odot (A(\identite \odot A^4))}_{ = A(\identite \odot A^4)} \nonumber\\
        &=  \J \odot(A( \J \odot (A((\J + \identite)\odot A^3)))) + A(\identite \odot A^4)  \nonumber\\
        &=  \J \odot(A( \J \odot (A(\J\odot A^3)))) + \underbrace{\J \odot(A( \J \odot (A(\identite\odot A^3))))}_{=(\J \odot A^2)(\identite \odot A^3)}  + A(\identite \odot A^4)  \nonumber\\
        &= \J \odot(A( \J \odot (A(\J \odot(A((\J + \identite)\odot A^2)))))) + (\J \odot A^2)(\identite \odot A^3) + A(\identite \odot A^4) \nonumber \\
        &= \J \odot(A( \J \odot (A(\J \odot(A(\J \odot A^2)))))) + \J \odot(A( \J \odot (A(\J \odot(A(\identite \odot A^2)))))) \label{eq : equalityp5}\\ & \qquad + (\J \odot A^2)(\identite \odot A^3) + A(\identite \odot A^4) \nonumber.
    \end{align}
    And 
    \begin{align}
        \J \odot(A( \J \odot (A(\J \odot(A(\identite \odot A^2)))))) &=  \J \odot(A( \J \odot A^2)(\identite \odot A^2))  \nonumber\\
        &=  \underbrace{\J \odot(A^3(\identite \odot A^2))}_{=(\J \odot A^3)(\identite \odot A^2)} - \underbrace{\J \odot(A( \identite \odot A^2)(\identite \odot A^2))}_{=A( \identite \odot A^2)(\identite \odot A^2)} \nonumber\\
        &= (\J \odot A^3)(\identite \odot A^2) - A( \identite \odot A^2)(\identite \odot A^2) \label{eq : equalityp51}.
    \end{align}
    Thus equation (\ref{eq : equalityp5}) and (\ref{eq : equalityp51}) give
    \begin{align}
        \J \odot A^5  &=  \J \odot(A( \J \odot (A(\J \odot(A(\J \odot A^2)))))) + (\J \odot A^3)(\identite \odot A^2) + (\J \odot A^2)(\identite \odot A^3) \label{eq : equalityp52}\\ & \qquad  + A(\identite \odot A^4) - A( \identite \odot A^2)(\identite \odot A^2) \nonumber.
    \end{align}
    Second, we have from equality (\ref{eq : equalityp41})
    \begin{align}
        \J\odot(A(\J \odot ((A\odot(A\J))A))) &= \J \odot (A((\identite \odot A^2)(\J \odot A^2) - \J\odot A^2)) \nonumber \\
        &=  \J \odot (A(\identite \odot A^2)A^2) - \underbrace{\J \odot (A(\identite \odot A^2)(\identite \odot A^2))}_{=A(\identite \odot A^2)(\identite \odot A^2)}  -  \underbrace{J \odot (A(\J \odot A^2))}_{=\J \odot A^3 - A(\identite \odot A^2)} \nonumber \\
        &=  \J \odot (A(\identite \odot A^2)A^2) - A(\identite \odot A^2)(\identite \odot A^2) - \J \odot A^3 \label{eq : equalityp53}\\ &\qquad + A(\identite \odot A^2) \nonumber.
    \end{align}
    Third, we have from equality (\ref{eq : equalityp42})
    \begin{align}
        \J\odot(A(\J \odot (A(A\odot(A\J))))) &= \J \odot (A(\J\odot(A(\identite \odot A^2)A) - \J\odot A^2)) \nonumber \\
        &=  \J \odot (A^2(\identite \odot A^2)A) - \underbrace{\J \odot (A(\identite\odot(A(\identite \odot A^2)A)}_{=A(\identite\odot(A(\identite \odot A^2)A)}  -  \underbrace{J \odot (A(\J \odot A^2))}_{=\J \odot A^3 - A(\identite \odot A^2)} \nonumber \\
        &=  \J \odot (A^2(\identite \odot A^2)A) - A(\identite\odot(A(\identite \odot A^2)A)  - \J \odot A^3 \label{eq : equalityp54}\\ &\qquad + A(\identite \odot A^2) \nonumber.
    \end{align}
    Fourth, we have from equality (\ref{eq : equalityp43})
    \begin{align}
        \J\odot(A(A \odot ((A\odot A^2)\J))) &= \J \odot (A((\identite \odot A^3)A) - A\odot A^2)) \nonumber \\
        &=   \J \odot (A(\identite \odot A^3)A)  - \J \odot(A(A \odot A^2)) \label{eq : equalityp55}.
    \end{align}
    Fifth, we have $A^3_{i,i} = \sum_k (A\odot A^2)_{i,k}$. Thus lemma \ref{lem : simplemultiplication} implies
    \begin{align}
        (\identite \odot A^3 )(\J\odot A^2) - A\odot A^2 \odot A^2 &= (\J\odot A^2)\odot((A \odot A^2)J)\label{eq : equalityp56}.
    \end{align}
    Sixth, we have $(AP_{3}^*)_{i,i} = \sum_k (C_{4f})_{i,k}$. Thus lemma \ref{lem : simplemultiplication} implies
    \begin{align}
        (\identite \odot (AP_{3}^*))A - C_{4f}  &= A\odot(C_{4f}J)\label{eq : equalityp57}.
    \end{align}
    Eventually, we have $(A^2)_{i,i} = \sum_k A_{i,k}$. Thus lemma \ref{lem : simplemultiplication} implies
    \begin{align}
        (\identite \odot A^2)P_{3}^* - A\odot P_{3}^*  &= P_{3}^* \odot (A\J) \label{eq : equalityp58}.
    \end{align}
    By removing equality $(\ref{eq : equalityp52})$, $(\ref{eq : equalityp53})$, $(\ref{eq : equalityp54})$, $(\ref{eq : equalityp55})$, $(\ref{eq : equalityp56})$,$(\ref{eq : equalityp57})$, $(\ref{eq : equalityp58})$ and $2J\odot(A(A\odot A^2))$ to $P_5$, we obtain exactly $P_{3}^* + C_{4f}  +2 A\odot A^2\odot A^2  + 3 \J\odot((A \odot A^2)A) - 4 A \odot A^2$. It concludes the proof.
\end{proof}

An alternative understanding of how $P_{5}^*$ computes the number of 5-paths connecting two nodes is illustrated in Figure \ref{fig : 5path}. The process can be described as follows:

The expression $AP_{4}^*$ calculates, from a given node, a non-closed 4-path followed by a 1-path. This computation inherently includes non-closed 5-paths as well as 5-cycles. The 5-cycles are subsequently removed by the Hadamard multiplication with 
$\J$, which zeroes out the diagonal elements. However, this operation also allows the possibility of traversing a 4-path and then returning to an intermediate node. To account for this and eliminate respectively paths returning to the fourth, third and second nodes of the $4$-paths, we subtract the terms $A \odot ((C_{4f})\J)   - C_{4f} $, $(\J \odot A^2)\odot((A\odot A^2)\J) + 4A\odot A^2 -2 A\odot A^2\odot A^2  - 3 \J\odot((A \odot A^2)A) $ and $(A\J)\odot P_{3}^* - P_{3}^* $.

\begin{figure}
    \centering
    \includegraphics[width = \textwidth]{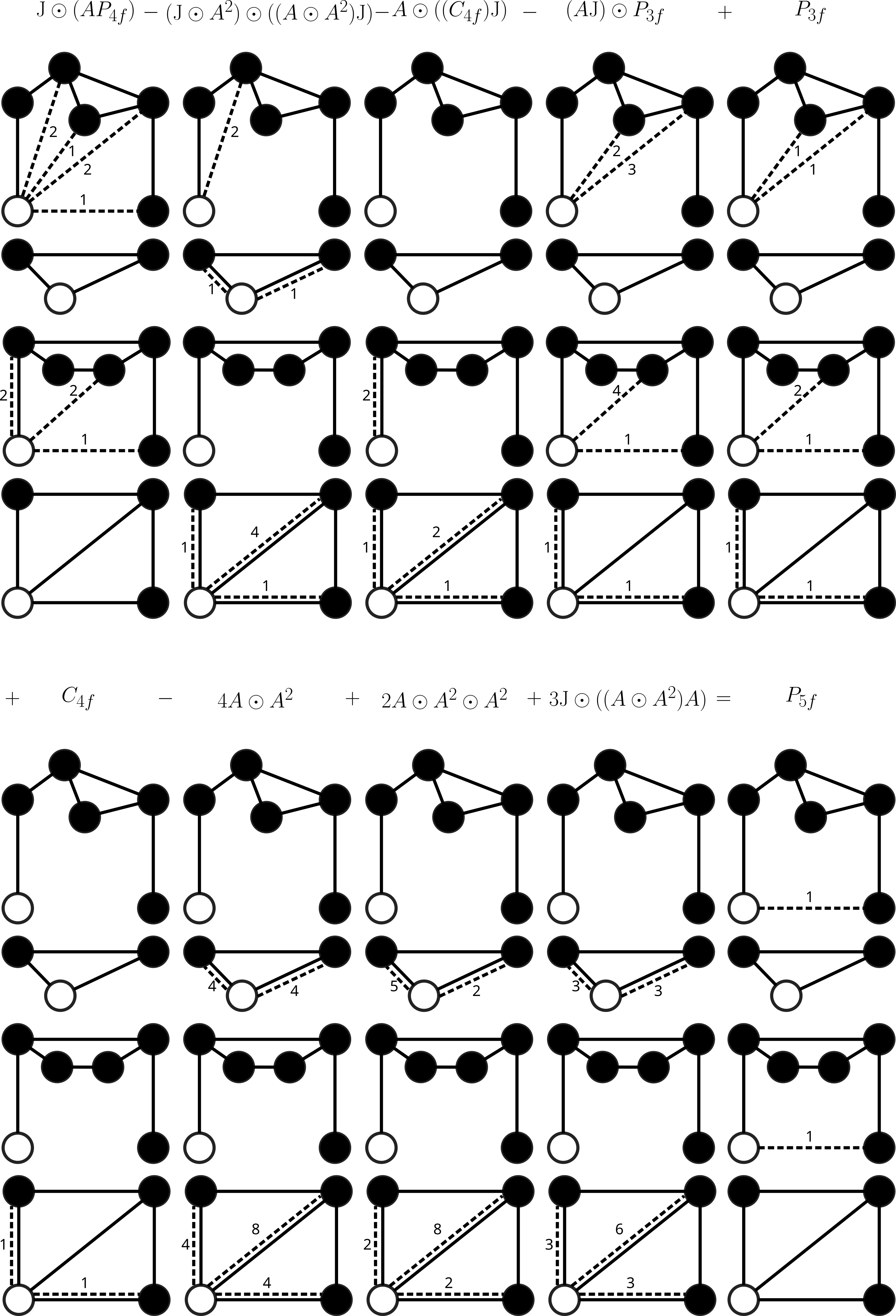}
    \caption{computation of $P_{5}^*$ for four graphs, the dashed lines indicate an entry in the column corresponding to the white node in the matrix associated with the term specified above.}
    \label{fig : 5path}
\end{figure}

Thanks to formula $(\ref{eq : 5pathreduct})$, we can derive the $6$-cycle formula.
\begin{corol}
  The following formula, denoted as $C_{6f}$ computes the number of $6$-cycles linking two nodes
\begin{align}\label{eq : 6cyclereduct}
 C_{6f} &= A\odot(AP_{4}^*) - (A \odot A^2)\odot((A\odot A^2)\J) - A \odot ((C_{4f})\J)  -  (A\J)\odot C_{4f} \\
 &\quad +  2C_{4f}  +2 A\odot A^2\odot A^2  + 3 A\odot((A \odot A^2)A) - 4 A \odot A^2. \nonumber
\end{align}      
\end{corol}

In terms of time complexity, $P_{5}^*$ is more efficient than $P_5$. The ratio of time complexity of $P_{5}^*$ over $P_5$ is $\frac 1 4$. It is directly derived from the number of matrix multiplications in both formulas. The significant decrease in time complexity can be attributed to the presence of both $P_{4}^*$, $C_{4f}$ and $P_{3}^*$ in the computational formula. The contributions of these terms to the time complexity are cumulative, meaning that each occurrence of $P_{4}^*$, $C_{4f}$ and $P_{3}^*$ adds to the total computational gain. As a result, their individual time complexities are aggregated, leading to the observed diminution in the overall time complexity of the formula. Figure \ref{fig:3pathcomplexity} shows the gain of complexity of $P_{5}^*$ and the ratio between the two formulas. 

GRL improves the computation of path and cycle at edge-level for $l = 5$ by a factor $4$.

\paragraph{$6$-paths and $7$-cycles}
The most effective explicit formula discovered to date for calculating the number of $6$-paths connecting two nodes was proposed by \cite{voropaev2012number}, it is 
\begin{align}\label{eq : 6path}
P_6 &= \J \odot A^6 - (\identite \odot A^5)A - A(\identite \odot A^5) - (\identite \odot A^2)(\J\odot A^4) - (\J\odot A^4)(\identite \odot A^2) \\ &\quad - (\identite \odot A^4)(\J\odot A^2) - (\J\odot A^2)(\identite \odot A^4) - \J \odot (A(\identite \odot A^4 )A)   + 3A \odot A^4 \nonumber\\
 &\quad - (\J\odot A^3)(\identite \odot A^3) - (\identite \odot A^3)(\J \odot A^3) - \J \odot (A(\identite \odot A^2 )A^3) - \J \odot (A^3(\identite \odot A^2 )A) \nonumber \\ 
 &\quad - \J \odot (A(\identite \odot A^3 )A^2) - \J \odot (A^2(\identite \odot A^3 )A) + 4A(\identite \odot A^2)(\identite \odot A^3) + 4(\identite \odot A^3)(\identite \odot A^2)A \nonumber \\ 
 &\quad + 6A\odot A^2 \odot A^3 + (\identite \odot A^2)A(\identite \odot A^3) + (\identite \odot A^3)A(\identite \odot A^2) + 3\J \odot((A\odot A^3)A)  \nonumber \\
 &\quad + 3\J \odot(A(A\odot A^3)) + (\identite \odot(A(\identite\odot A^3)A))A + A(\identite \odot(A(\identite\odot A^3)A)) - \J \odot (A^2(\identite \odot A^2 )A^2) \nonumber \\
 &\quad + 2A(\identite \odot A^2)(\identite \odot A^2) + 2(\identite \odot A^2)(\identite \odot A^2)A + \J \odot A^2\odot A^2 \odot A^2 + (\identite \odot A^2)(\J \odot A^2)(\identite \odot A^2)
 \nonumber \\ 
 &\quad + 3\J \odot((A\odot A^2)A^2) + 3\J \odot(A^2(A\odot A^2)) + (\identite \odot(A(\identite\odot A^2)A))(\J \odot A^2) + (\J \odot A^2) (\identite \odot(A(\identite\odot A^2)A))\nonumber \\
 &\quad   + \J \odot ((\identite \odot A^2)A(\identite \odot A^2)A) + \J \odot (A(\identite \odot A^2)A(\identite \odot A^2)) + 2\J \odot (A(\identite \odot A^2)(\identite \odot A^2)A)
   \nonumber \\
  &\quad + (\identite \odot(A(\identite\odot A^2)A^2))A + A(\identite \odot(A(\identite\odot A^2)A^2)) + (\identite \odot(A^2(\identite\odot A^2)A))A + A(\identite \odot(A^2(\identite\odot A^2)A)) \nonumber \\ 
  &\quad + 3\J \odot ((A\odot A^2 \odot A^2)A) + 3\J \odot (A(A\odot A^2 \odot A^2)) -12(\identite \odot A^2)(A\odot A^2) -12(A\odot A^2)(\identite \odot A^2) \nonumber \\ 
  &\quad -4\J\odot A^2 \odot A^2 -8 A \odot(A(A\odot A^2)) -8 A \odot((A\odot A^2)A) - 3 A \odot (A(\identite \odot A^2)A) \nonumber \\ 
  &\quad +3 \J \odot(A(A\odot A^2)A) + 
 \J \odot (A(\identite \odot(A(\identite \odot A^2)A))A) - 4 \J \odot (A(A\odot A^2)) - 4 \J \odot ((A\odot A^2)A) \nonumber \\ 
  &\quad + 4 \J \odot A^4 - 5 A(\identite \odot A^3) - 5 (\identite \odot A^3)A - 4(\identite \odot (A(A\odot A^2)))A - 4A(\identite \odot (A(A\odot A^2))) \nonumber \\ 
  &\quad - 4(\identite \odot ((A\odot A^2)A))A - 4A(\identite \odot ((A\odot A^2)A)) - 7(\identite A^2)(\J \odot A^2) - 7(\J \odot A^2)(\identite A^2) \nonumber \\ 
  &\quad -10 \J \odot(A(\identite \odot A^2)A) +44A\odot A^2 + 12\J \odot A^2 \nonumber .
\end{align}

Following the formula for the $l$-cycle proposed in \cite{voropaev2012number}, we obtain for the $7$-cycle the following formula
\begin{align}\label{eq : 7cycle}
C_7 &= A \odot A^6 - (\identite \odot A^5)A - A(\identite \odot A^5) - (\identite \odot A^2)(A\odot A^4) - (A\odot A^4)(\identite \odot A^2) \\ &\quad - (\identite \odot A^4)(A\odot A^2) - (A\odot A^2)(\identite \odot A^4) - A \odot (A(\identite \odot A^4 )A)   + 3A \odot A^4 \nonumber\\
 &\quad - (A\odot A^3)(\identite \odot A^3) - (\identite \odot A^3)(A \odot A^3) - A \odot (A(\identite \odot A^2 )A^3) - A \odot (A^3(\identite \odot A^2 )A) \nonumber \\ 
 &\quad - A \odot (A(\identite \odot A^3 )A^2) - A \odot (A^2(\identite \odot A^3 )A) + 4A(\identite \odot A^2)(\identite \odot A^3) + 4(\identite \odot A^3)(\identite \odot A^2)A \nonumber \\ 
 &\quad + 6A\odot A^2 \odot A^3 + (\identite \odot A^2)A(\identite \odot A^3) + (\identite \odot A^3)A(\identite \odot A^2) + 3A \odot((A\odot A^3)A)  \nonumber \\
 &\quad + 3A \odot(A(A\odot A^3)) + (\identite \odot(A(\identite\odot A^3)A))A + A(\identite \odot(A(\identite\odot A^3)A)) - A \odot (A^2(\identite \odot A^2 )A^2) \nonumber \\
 &\quad + 2A(\identite \odot A^2)(\identite \odot A^2) + 2(\identite \odot A^2)(\identite \odot A^2)A + A \odot A^2\odot A^2 \odot A^2 + (\identite \odot A^2)(A \odot A^2)(\identite \odot A^2)
 \nonumber \\ 
 &\quad + 3A \odot((A\odot A^2)A^2) + 3A \odot(A^2(A\odot A^2)) + (\identite \odot(A(\identite\odot A^2)A))(A \odot A^2) + (A \odot A^2) (\identite \odot(A(\identite\odot A^2)A))\nonumber \\
 &\quad   + A\odot ((\identite \odot A^2)A(\identite \odot A^2)A) + A \odot (A(\identite \odot A^2)A(\identite \odot A^2)) + 2A \odot(A(\identite \odot A^2)(\identite \odot A^2)A)
   \nonumber \\
  &\quad + (\identite \odot(A(\identite\odot A^2)A^2))A + A(\identite \odot(A(\identite\odot A^2)A^2)) + (\identite \odot(A^2(\identite\odot A^2)A))A + A(\identite \odot(A^2(\identite\odot A^2)A)) \nonumber \\ 
  &\quad + 3A \odot ((A\odot A^2 \odot A^2)A) + 3A \odot (A(A\odot A^2 \odot A^2)) -12(\identite \odot A^2)(A\odot A^2) -12(A\odot A^2)(\identite \odot A^2) \nonumber \\ 
  &\quad -4A\odot A^2 \odot A^2 -8 A \odot(A(A\odot A^2)) -8 A \odot((A\odot A^2)A) - 3 A \odot (A(\identite \odot A^2)A) \nonumber \\ 
  &\quad +3 A \odot(A(A\odot A^2)A) + 
 A \odot (A(\identite \odot(A(\identite \odot A^2)A))A) - 4 A \odot (A(A\odot A^2)) - 4 A \odot ((A\odot A^2)A) \nonumber \\ 
  &\quad + 4 A \odot A^4 - 5 A(\identite \odot A^3) - 5 (\identite \odot A^3)A - 4(\identite \odot (A(A\odot A^2)))A - 4A(\identite \odot (A(A\odot A^2))) \nonumber \\ 
  &\quad - 4(\identite \odot ((A\odot A^2)A))A - 4A(\identite \odot ((A\odot A^2)A)) - 7(\identite A^2)(A \odot A^2) - 7(A \odot A^2)(\identite A^2) \nonumber \\ 
  &\quad -10 A \odot(A(\identite \odot A^2)A) +56A\odot A^2  \nonumber .
\end{align}

Again, GRL found an improved formula for $l=6$. 
\begin{thm}
The following formula, denoted as $P_{6}^*$ computes the number of $6$-paths linking two nodes
\begin{align}\label{eq : 6pathreduct}
 P_{6}^* &= \J\odot(AP_{5}^*) - P_{3}^*\odot((A\odot A^2)\J) - A \odot (C_{5f}\J)  -  P_{4}^*\odot (A\J) \\
 &\quad + P_{4}^* + C_{5f} -(\J \odot A^2)\odot (C_{4f}\J) +4 A\odot A^2\odot P_{3}^*  + 3 \J\odot((A \odot A^2)(\J \odot A^2)) \nonumber \\
 &\quad + \J \odot A^2 \odot A^2 \odot A^2 + 3 \J \odot ((A \odot A^2 \odot A^2)A)- 4\J \odot A^2 \odot A^2 - 8 A \odot(A(A\odot A^2)) \nonumber \\
 &\quad - 8 A \odot((A\odot A^2)A) - 4\J\odot( (A\odot A^2)A) - 3 A\odot ((A\odot A^2)\J) + 17 A\odot A^2 + 3 \J \odot A^2. \nonumber
\end{align}    
\end{thm}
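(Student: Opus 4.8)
The plan is to prove $P_{6}^* = P_6$ by lifting the recursive peeling argument used for $P_{5}^*$ one path-length higher, reusing the already-established identities $P_{3}^* = P_3$, $P_{4}^* = P_4$, $P_{5}^* = P_5$ and the cycle formulae $C_{4f} = A\odot P_{3}^*$, $C_{5f} = A\odot P_{4}^*$. The engine is exactly the pair of tools from the lower cases: the splitting $A^{k} = A\bigl((\J + \identite)\odot A^{k-1}\bigr)$, together with the fact that $\J\odot(A(\identite\odot X)) = A(\identite\odot X)$ for any $X$ (since $A(\identite\odot X)$ rescales the columns of $A$, which has null diagonal), and Lemma \ref{lem : simplemultiplication} for converting terms of the shape $X\odot(Y\J)$.

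First I would handle the leading term $\J\odot(AP_{5}^*)$. Substituting the fully nested form of $P_{5}^*$ from (\ref{eq : 5pathreduct}) and peeling the outer products with the splitting identity, exactly as in the chain (\ref{eq : equalityp5})--(\ref{eq : equalityp52}), extracts $\J\odot A^6$ from $\J\odot(AP_{5}^*)$ together with a collection of lower-order diagonal-leakage terms (those in which a walk closes through a diagonal factor). These leakage terms are precisely the ones that must be reconciled against the non-leading terms of $P_6$.

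Second, each remaining term of $P_{6}^*$ of the form $X\odot(Y\J)$ --- the corrections that subtract returns to the fifth, fourth, third and second nodes of the underlying $5$-path --- is rewritten with Lemma \ref{lem : simplemultiplication}, choosing $P,M,N$ so that its hypothesis $N_{i,i} = \sum_k M_{i,k}$ holds. The relevant diagonal identities are $(A^2)_{i,i} = \sum_k A_{i,k}$, $(A^3)_{i,i} = \sum_k (A\odot A^2)_{i,k}$, $(AP_{3}^*)_{i,i} = \sum_k (C_{4f})_{i,k}$ and $(AP_{4}^*)_{i,i} = \sum_k (C_{5f})_{i,k}$, mirroring the uses in (\ref{eq : equalityp56})--(\ref{eq : equalityp58}). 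Each application trades one $X\odot(Y\J)$ term for a term in Voropaev's basis plus a simpler diagonal correction, and the cycle formulae let me re-express the terms carrying $C_{4f}$ and $C_{5f}$ back into $A\odot(\cdot)$ form. I would then collect every contribution and verify that the total coincides term-by-term with $P_6$ as written in (\ref{eq : 6path}).

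The hard part will be the final collection step rather than any single conceptual move: all the individual manipulations are routine instances of peeling and of the one lemma, but $P_6$ contains on the order of sixty terms, and the expansions generate many intermediate terms whose integer coefficients and signs must cancel and recombine exactly. Keeping this bookkeeping consistent --- ensuring, for instance, that the constant multiples such as the $17\,A\odot A^2$ and $3\,\J\odot A^2$ residual terms emerge with the correct coefficients after all cancellations --- is where the genuine risk of error lies, and it is what I would organise most carefully (for example by grouping terms according to which intermediate node of the $5$-path they correct).
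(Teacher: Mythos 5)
Your plan is correct and follows essentially the same route as the paper's proof: the paper likewise reconciles $\J\odot(AP_{5}^*)$ with $\J\odot A^6$ by repeated peeling via $A^k = A((\J+\identite)\odot A^{k-1})$ (equations (\ref{eq : equalityp6})--(\ref{eq : equalityp610})), then converts each $X\odot(Y\J)$ correction term through Lemma \ref{lem : simplemultiplication} using exactly the four diagonal identities you list (equations (\ref{eq : equalityp611})--(\ref{eq : equalityp614})), and concludes by collecting all contributions against Voropaev's $P_6$. The only remaining work is the coefficient bookkeeping you already flag, which the paper carries out explicitly.
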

\begin{proof}
    We will show that $P_{6}^* = P_6$. First, we have
    \begin{align}
        \J \odot A^6  &= \J \odot ((A((\J + \identite )\odot A^5)))  \nonumber\\
        &=  \J \odot(A( \J \odot A^5)) + \underbrace{\J \odot (A(\identite \odot A^5))}_{ = A(\identite \odot A^5)} \nonumber\\
        &=  \J \odot(A( \J \odot (A((\J + \identite)\odot A^4)))) + A(\identite \odot A^5)  \nonumber\\
        &=  \J \odot(A( \J \odot (A(\J\odot A^4)))) + \underbrace{\J \odot(A( \J \odot (A(\identite\odot A^4))))}_{=(\J \odot A^2)(\identite \odot A^4)}  + A(\identite \odot A^5)  \nonumber\\
        &= \J \odot(A( \J \odot (A(\J \odot(A((\J + \identite)\odot A^3)))))) + (\J \odot A^2)(\identite \odot A^4) + A(\identite \odot A^5) \nonumber \\
        &= \J \odot(A( \J \odot (A(\J \odot(A(\J \odot A^3)))))) + \underbrace{\J \odot(A( \J \odot (A(\J \odot(A(\identite \odot A^3))))))}_{= (\J \odot A^3)(\identite \odot A^3) - A(\identite \odot A^2)(\identite \odot A^3)}  \nonumber\\ &\qquad + (\J \odot A^2)(\identite \odot A^4) + A(\identite \odot A^5) \nonumber \\
        &= \J \odot(A( \J \odot (A(\J \odot(A(\J \odot (A((\J + \identite)\odot A^2)))))))) + (\J \odot A^3)(\identite \odot A^3) \nonumber \\
        &\qquad - A(\identite \odot A^2)(\identite \odot A^3) + (\J \odot A^2)(\identite \odot A^4) + A(\identite \odot A^5) \nonumber \\
        &= \J \odot(A( \J \odot (A(\J \odot(A(\J \odot (A(\J \odot A^2)))))))) + \underbrace{\J \odot(A( \J \odot (A(\J \odot(A(\J \odot (A(\identite \odot A^2))))))))}_{= (\J \odot A^4)(\identite \odot A^2) - A(\identite \odot A^2)(\identite \odot A^3) - (\J \odot A^2)(\identite \odot A^2)(\identite \odot A^2)} \nonumber \\
        &\qquad + (\J \odot A^3)(\identite \odot A^3) - A(\identite \odot A^2)(\identite \odot A^3) + (\J \odot A^2)(\identite \odot A^4) + A(\identite \odot A^5) \nonumber \\
        &= \J \odot(A( \J \odot (A(\J \odot(A(\J \odot (A(\J \odot A^2)))))))) + (\J \odot A^4)(\identite \odot A^2) \label{eq : equalityp6} \\
        &\qquad - 2A(\identite \odot A^2)(\identite \odot A^3) - (\J \odot A^2)(\identite \odot A^2)(\identite \odot A^2) + (\J \odot A^3)(\identite \odot A^3) \nonumber \\
        &\qquad + (\J \odot A^2)(\identite \odot A^4) + A(\identite \odot A^5) \nonumber.
    \end{align}
    Second, we have from equality (\ref{eq : equalityp53})
    \begin{align}
        \J\odot(A(\J\odot(A(\J \odot ((A\odot(A\J))A))))) &= \J \odot (A((\J \odot (A(\identite \odot A^2)A^2) - A(\identite \odot A^2)(\identite \odot A^2)   \nonumber \\
        &\qquad \qquad - \J \odot A^3 + A(\identite \odot A^2))) \nonumber \\
        &=  \underbrace{\J \odot (A(\J \odot (A(\identite \odot A^2)A^2)))}_{= \J \odot (A^2(\identite \odot A^2)A^2) - A(\identite \odot(A(\identite \odot A^2)A^2))} - (\J\odot A^2)(\identite \odot A^2)(\identite \odot A^2) \nonumber \\
        &\qquad \underbrace{\J \odot(A(\J\odot A^3))}_{= \J \odot A^4 - A(\identite \odot A^3)} + (\J \odot A^2)(\identite \odot A^2)\nonumber \\
        &=  \J \odot (A^2(\identite \odot A^2)A^2) - A(\identite \odot(A(\identite \odot A^2)A^2)) \label{eq : equalityp61}\\ &\qquad - (\J\odot A^2)(\identite \odot A^2)(\identite \odot A^2) - \J \odot A^4 + A(\identite \odot A^3) \nonumber \\
        &\qquad + (\J \odot A^2)(\identite \odot A^2) \nonumber.
    \end{align}
    Third, we have from equalities (\ref{eq : equalityp54})
    \begin{align}
        \J \odot(A(\J\odot(A(\J \odot (A(A\odot(A\J))))))) &= \J \odot (A(\J \odot (A^2(\identite \odot A^2)A) - A(\identite\odot(A(\identite \odot A^2)A))   \nonumber \\
        &\qquad \qquad - \J \odot A^3 + A(\identite \odot A^2))) \nonumber \\
        &=  \underbrace{\J \odot (A(\J \odot (A^2(\identite \odot A^2)A)))}_{= \J \odot(A^3(\identite \odot A^2)A) - A(\identite(A^2(\identite \odot A^2)A))} - (\J \odot A^2)(\identite \odot(A(\identite \odot A^2)A)) \nonumber \\
        &\qquad - \J \odot A^4 + A(\identite \odot A^3) + (\J \odot A^2)(\identite \odot A^2) \nonumber \\
        &=  \J \odot(A^3(\identite \odot A^2)A) - A(\identite(A^2(\identite \odot A^2)A)) \label{eq : equalityp62}\\ &\qquad - (\J \odot A^2)(\identite \odot(A(\identite \odot A^2)A)) - \J \odot A^4 + A(\identite \odot A^3) \nonumber \\
        &\qquad + (\J \odot A^2)(\identite \odot A^2) \nonumber.
    \end{align}
    Fourth, we have from equality (\ref{eq : equalityp55})
    \begin{align}
        \J \odot(A(\J\odot(A(A \odot ((A\odot A^2)\J))))) &= \J \odot (A(\J \odot (A(\identite \odot A^3)A)  - \J \odot(A(A \odot A^2))))  \nonumber \\
        &= \underbrace{\J \odot (A(\J \odot (A(\identite \odot A^3)A)))}_{= \J \odot (A^2(\identite \odot A^2)A) - A(\identite \odot(A(\identite \odot A^3)A))} - \underbrace{\J \odot(A(\J \odot(A(A\odot A^2))))}_{= \J \odot (A^2(A\odot A^2)) - A(\identite \odot(A(A\odot A^2)))}\nonumber \\
        &=   \J \odot (A^2(\identite \odot A^2)A) - A(\identite \odot(A(\identite \odot A^3)A)) \label{eq : equalityp63} \\
        &\qquad - \J \odot (A^2(A\odot A^2)) + A(\identite \odot(A(A\odot A^2))) \nonumber.
    \end{align}
    Fifth, we have from equality (\ref{eq : equalityp56})
    \begin{align}
    \J \odot(A((\J\odot A^2)\odot((A \odot A^2)J))) &=\J \odot(A((\identite \odot A^3 )(\J\odot A^2) - A\odot A^2 \odot A^2)) \nonumber \\
     &= \underbrace{\J \odot(A(\identite \odot A^3 )(\J\odot A^2))}_{\J \odot(A(\identite \odot A^3)A^2) - A(\identite \odot A^3)(\identite \odot A^2)} - \J \odot A(A\odot A^2 \odot A^2) \nonumber \\
     &= \J \odot(A(\identite \odot A^3)A^2) - A(\identite \odot A^3)(\identite \odot A^2) \label{eq : equalityp64} \\
     &\qquad - \J \odot A(A\odot A^2 \odot A^2) \nonumber.
    \end{align}
    Sixth, we have from equality (\ref{eq : equalityp57})
    \begin{align}
    \J \odot(A(A\odot(C_{4f}J)) &= \J \odot (A((\identite \odot (AP_{3}))A - C_{4}  )) \nonumber \\
        &= \J \odot (A(\identite \odot (A(\J \odot A^3 - A(\identite \odot A^2) - (\identite \odot A^2)A + A)))A) \nonumber \\
        &\qquad - \J \odot (A(A \odot A^3 - A(\identite \odot A^2) - (\identite \odot A^2)A + A)) \nonumber \\
        &= \J \odot(A(\identite \odot A^4)A) - \J \odot(A(\identite \odot A^2)(\identite \odot A^2)A)  \label{eq : equalityp65} \\ 
        &\qquad - \J \odot(A(\identite \odot (A(\identite \odot A^2)A))A) + 2 \J \odot(A(\identite \odot A^2)A) \nonumber \\
        &\qquad - \J \odot (A(A\odot A^3)) + (\J \odot A^2)(\identite \odot A^2) -  (\J \odot A^2)\nonumber.
    \end{align}
    Seventh, we have from equality (\ref{eq : equalityp58})
    \begin{align}
    \J \odot (A(P_{3}^* \odot (A\J))) &= 
        \J \odot (A((\identite \odot A^2)P_{3} - C_{4})) \nonumber \\
        &= \J \odot (A((\identite \odot A^2)(\J \odot A^3 - A(\identite \odot A^2) - (\identite \odot A^2)A + A))) \nonumber \\
        &\qquad - \J \odot (A(A \odot A^3 - A(\identite \odot A^2) - (\identite \odot A^2)A + A)) \nonumber \\
        &= \J \odot (A(\identite \odot A^2)A^3) - A(\identite \odot A^2)(\identite \odot A^3)  \label{eq : equalityp66}\\
        &\qquad - \J \odot (A(\identite \odot A^2)A(\identite \odot A^2)) - \J \odot (A(\identite \odot A^2)(\identite \odot A^2)A) \nonumber \\
        &\qquad + 2 \J \odot(A(\identite \odot A^2)A) - \J \odot (A(A\odot A^3)) + (\J \odot A^2)(\identite \odot A^2) -  (\J \odot A^2) \nonumber.
    \end{align}
    Eighth, we have 
    \begin{align}
        \J \odot(AP_{3}^*) &= \J \odot (AP_3) \nonumber \\
        &= \J \odot (A(\J \odot A^3 - A(\identite \odot A^2) - (\identite \odot A^2)A + A))) \nonumber \\
        &= \J \odot A^4 - A(\identite \odot A^3) - (\J \odot A^2)(\identite \odot A^2) - \J \odot(A(\identite \odot A^2)A) + (\J \odot A^2) \label{eq : equalityp67}.
    \end{align}
    Ninth, we have 
    \begin{align}
        \J \odot(AC_{4f}) &= \J \odot (AC_4) \nonumber \\
        &= \J \odot (A(A \odot A^3 - A(\identite \odot A^2) - (\identite \odot A^2)A + A))) \nonumber \\
        &= \J \odot (A(A \odot A^3)) - (\J \odot A^2)(\identite \odot A^2) - \J \odot(A(\identite \odot A^2)A) + (\J \odot A^2) \label{eq : equalityp68}.
    \end{align}
    Tenth, we have 
    \begin{align}
        \J \odot(A(\J \odot ((A\odot A^2)A))) &= \J \odot (A(A\odot A^2)A) - A(\identite \odot((A\odot A^2)A)) \label{eq : equalityp69}.
    \end{align}
    Eleventh, we have 
    \begin{align}
        \J \odot(A(\J \odot (A(A\odot A^2)))) &= \J \odot (A^2(A\odot A^2)) - A(\identite \odot(A(A\odot A^2))) \label{eq : equalityp610}.
    \end{align}
    From equalities (\ref{eq : equalityp6}) to (\ref{eq : equalityp610}) combined with $\J \odot (A(A\odot A^2\odot A^2))$ and $\J \odot (A(A\odot A^2))$ we obtain the equivalence between $\J \odot (AP_{5}^*)$ and all those matrices.

    Twelfth, we have $A^3_{i,i} = \sum_k (A\odot A^2)_{i,k}$. Thus lemma \ref{lem : simplemultiplication} implies
   \begin{align}
         P_{3}^*\odot((A\odot A^2)\J) &= (\identite \odot A^3)P_{3} - A^2 \odot C_4 \nonumber \\
         &= (\identite \odot A^3)(\J \odot A^3) - (\identite \odot A^3)A(\identite \odot A^2) - (\identite \odot A^3)(\identite \odot A^2)A + (\identite \odot A^3)A \nonumber \\
         &\qquad - A^2\odot A \odot A^3 + \underbrace{A^2 \odot(A(\identite \odot A^2))}_{= (A\odot A^2)(\identite \odot A^2)} + \underbrace{A^2 \odot((\identite \odot A^2)A)}_{= (\identite \odot A^2)(A\odot A^2)} - A^2\odot A \nonumber \\
         &= (\identite \odot A^3)(\J \odot A^3) - (\identite \odot A^3)A(\identite \odot A^2) - (\identite \odot A^3)(\identite \odot A^2)A  \label{eq : equalityp611} \\
         &\qquad + (\identite \odot A^3)A - A\odot A^2 \odot A^3 + (A\odot A^2)(\identite \odot A^2) + (\identite \odot A^2)(A\odot A^2) \nonumber \\
         &\qquad - A\odot A^2 \nonumber.
    \end{align}
     Thirteenth, we have $(AP_{4}^*)_{i,i} = \sum_k (C_{5f})_{i,k}$. Thus lemma \ref{lem : simplemultiplication} implies
   \begin{align}
         A \odot (C_{5f}\J) &= (\identite \odot (AP_4 ))A - C_5 \nonumber \\
         &= \underbrace{(\identite \odot (A(\J \odot A^4)))A}_{= (\identite \odot A^5)A} - \underbrace{(\identite \odot (A(\J \odot(A(\identite \odot A^2)A))))A}_{= (\identite \odot (A^2(\identite \odot A^2)A))A } + 2 \underbrace{(\identite \odot (A(\J \odot A^2)))A}_{= (\identite \odot A^3)A} \nonumber \\
         &\qquad - \underbrace{(\identite \odot (A(\identite \odot A^2)(\J\odot A^2)))A}_{= (\identite \odot (A(\identite \odot A^2)A^2))A} - \underbrace{(\identite \odot (A(\J \odot A^2)(\identite \odot A^2)))A}_{= (\identite \odot A^3)(\identite \odot A^2)A} \nonumber \\
         &\qquad - (\identite \odot (A(\identite \odot A^3)A))A - (\identite \odot A^2)(\identite \odot A^3)A  + 3 (\identite \odot (A(A\odot A^2)))A - A\odot A^4  \nonumber \\
         &\qquad + A \odot (A(\identite \odot A^2 )A)  + (\identite \odot A^2)(A \odot A^2) + (A \odot A^2)(\identite \odot A^2)\nonumber \\
 &\qquad + A(\identite \odot A^3 ) + (\identite \odot A^3 )A - 5 A\odot A^2 \nonumber \\
           \label{eq : equalityp612} \\
         &\qquad - 2 (\identite \odot A^3)(\identite \odot A^2)A  
 - (\identite \odot (A(\identite \odot A^3)A))A + 3 (\identite \odot (A(A\odot A^2)))A \nonumber \\
         &\qquad + A \odot (A(\identite \odot A^2 )A)  + (\identite \odot A^2)(A \odot A^2) + (A \odot A^2)(\identite \odot A^2) + 2(\identite \odot A^3 )A \nonumber \\
 &\qquad + A(\identite \odot A^3 ) + (\identite \odot A^3 )A - 5 A\odot A^2 \nonumber.
    \end{align}
    Fourteenth, we have $(A^2)_{i,i} = \sum_k A_{i,k}$. Thus lemma \ref{lem : simplemultiplication} implies
   \begin{align}
         P_{4}^* \odot (A\J) &= (\identite \odot A^2)P_4 - C_5 \nonumber \\
         &= (\identite \odot A^2)(\J \odot A^4) - \J \odot((\identite \odot A^2)A(\identite \odot A^2)A)  + 2 (\identite \odot A^2)(\J \odot A^2)\label{eq : equalityp613}  \\
         &\qquad - (\identite \odot A^2)(\identite \odot A^2)(\J \odot A^2) - (\identite \odot A^2)(\J \odot A^2) (\identite \odot A^2) - (\identite \odot A^2)A(\identite \odot A^3)\nonumber \\
         &\qquad - (\identite \odot A^2)(\identite \odot A^3)A + 3 (\identite \odot A^2)(A\odot A^2) - A\odot A^4 + A \odot (A(\identite \odot A^2 )A) \nonumber \\
         &\qquad   + (\identite \odot A^2)(A \odot A^2) + (A \odot A^2)(\identite \odot A^2)+ A(\identite \odot A^3 ) + (\identite \odot A^3 )A - 5 A\odot A^2 \nonumber . 
    \end{align}
    Fifteenth, we have $(AP_{3}^*)_{i,i} = \sum_k (C_{4f})_{i,k}$. Thus lemma \ref{lem : simplemultiplication} implies
   \begin{align}
         (\J \odot A^2) \odot (C_{4f}\J) &= (\identite \odot (AP_3))(\J \odot A^2)  - (\J \odot A^2) \odot C_4 \nonumber \\
         &= \underbrace{(\identite \odot (A(\J \odot A^3)))(\J \odot A^2)}_{= (\identite \odot A^4)(\J \odot A^2) } - (\identite \odot (A(\identite \odot A^2)A))(\J \odot A^2) \nonumber \\
         & \qquad -  \underbrace{(\identite \odot (A^2(\identite \odot A^2)))(\J \odot A^2)}_{= (\identite \odot A^2)(\identite \odot A^2)(\J \odot A^2)} + (\identite \odot A^2)(\J \odot A^2) - A^2 \odot A \odot A^3 \nonumber \\
         &\qquad + \underbrace{(\J\odot A^2)\odot ((\identite \odot A^2)A)}_{=(\identite \odot A^2)(A\odot A^2)} + \underbrace{(\J\odot A^2)\odot (A(\identite \odot A^2))}_{=(A\odot A^2)(\identite \odot A^2)} - A^2 \odot A 
 \nonumber \\
 &=  (\identite \odot A^4)(\J \odot A^2)  - (\identite \odot (A(\identite \odot A^2)A))(\J \odot A^2) \label{eq : equalityp614} \\
         & \qquad -   (\identite \odot A^2)(\identite \odot A^2)(\J \odot A^2) + (\identite \odot A^2)(\J \odot A^2) - A \odot A^2 \odot A^3 \nonumber \\
         &\qquad + (\identite \odot A^2)(A\odot A^2) + (A\odot A^2)(\identite \odot A^2)- A \odot A^2 
 \nonumber  .
 \end{align}
 And eventually, from the previous equality, we have 
   \begin{align}
         A \odot A^2 \odot P_{3}^* &= A \odot A^2 \odot A^3  - (\identite \odot A^2)(A\odot A^2) - (A\odot A^2)(\identite \odot A^2)+ A \odot A^2  \label{eq : equalityp615},
    \end{align}
    and 
    \begin{align}
         C_{4f}A &= \J \odot ((A \odot A^3)A)  - \J \odot(A(\identite \odot A^2)A) - (\identite \odot A^2)(\J\odot A^2)+ \J \odot A^2  \label{eq : equalityp616},
    \end{align}
    By removing the correct combination of $\J \odot (AP_{5}^*)$ and equations  $(\ref{eq : equalityp611})$ to  $(\ref{eq : equalityp616})$ to $P_6$, we obtain exactly $P_{4}^* + C_{5f}  + 3 \J\odot((A \odot A^2)(\J \odot A^2)) + \J \odot A^2 \odot A^2 \odot A^2 + 3 \J \odot ((A \odot A^2 \odot A^2)A)- 4\J \odot A^2 \odot A^2 - 8 A \odot(A(A\odot A^2)) - 8 A \odot((A\odot A^2)A) - 4\J\odot( (A\odot A^2)A) - 3 A\odot ((A\odot A^2)\J) + 17 A\odot A^2 + 3 \J \odot A^2$. It concludes the proof.
\end{proof}

An alternative understanding of how $P_{6}^*$ computes the number of 5-paths connecting two nodes can be described as follows:

The expression $AP_{6}^*$ calculates, from a given node, a non-closed 5-path followed by a 1-path. This computation inherently includes non-closed 6-paths as well as 6-cycles. The 6-cycles are subsequently removed by the Hadamard multiplication with 
$\J$, which zeroes out the diagonal elements. However, this operation also allows the possibility of traversing a 5-path and then returning to an intermediate node. To account for this and eliminate respectively paths returning to the fifth, fourth, third and second nodes of the $4$-paths, we subtract the other terms in $P_{6}^*$.

Thanks to formula $(\ref{eq : 5pathreduct})$, we can derive the $7$-cycle formula.
\begin{corol}
  The following formula, denoted as $C_{7f}$ computes the number of $7$-cycles linking two nodes
\begin{align}\label{eq : 7cyclereduct}
 C_{6f} &= A\odot(AP_{5}^*) -  C_{4f}\odot((A\odot A^2)\J) - A \odot (C_{5f}\J)  -  C_{5f}\odot (A\J) + 2 C_{5f} \\
 &\quad  -(A \odot A^2)\odot (C_{4f}\J) +4 A\odot A^2\odot P_{3}^*  + 3 A\odot((A \odot A^2)(\J \odot A^2) + A \odot A^2 \odot A^2 \odot A^2 \nonumber \\
 &\quad  + 3 A \odot ((A \odot A^2 \odot A^2)A)- 4A \odot A^2 \odot A^2 - 8 A \odot(A(A\odot A^2)) - 12 A \odot((A\odot A^2)A) \nonumber \\
 &\quad  - 3 A\odot ((A\odot A^2)\J) + 20 A\odot A^2 . \nonumber
\end{align}      
\end{corol}

In terms of time complexity, $P_{6}^*$ is more efficient than $P_6$. The ratio of time complexity of $P_{6}^*$ over $P_6$ is $\frac 4 {25}$. It is directly derived from the number of matrix multiplications in both formulas. Figure \ref{fig:3pathcomplexity} shows the gain of complexity of $P_{6}^*$ and the ratio between the two formulas. 

GRL improves the computation of path and cycle at edge-level for $l = 6$ by a factor $6.25$.

We evaluated the computational time of $P_3^*$ and $P_6^*$ and the precomputation times of GSN on the IMDB-MULTI dataset, with the results shown in Figure \ref{fig:time}. This analysis underscores the efficiency gains provided by the formulae. 

\begin{figure}
    \centering
    \includegraphics[width = .49\textwidth]{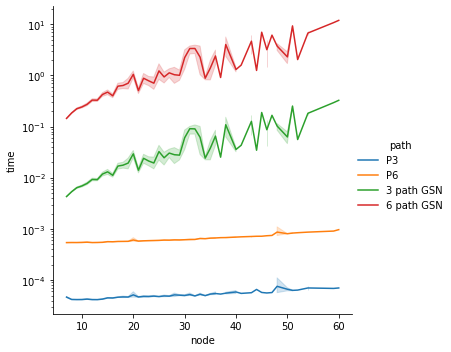}
    \caption{Time computation of GRL formulae and preconsumption of GSN (log scale) on IMDB-MULTI dataset for path of length 3 and 6.}
    \label{fig:time}
\end{figure}

 \section{GRL applied on directed graphs}\label{app : directed}
 In this section, we extend the application of GRL to derive matrix formulae for counting paths and cycles in directed graphs.  

To adapt the grammar reduction approach proposed in \cite{piquenot2024grammatical} for directed graphs, minor modifications are necessary. Specifically, the transpose operation becomes critical due to the asymmetry of the adjacency matrix in directed graphs. Despite this adjustment, the proof used to eliminate the $V_c$ variable remains valid. Consequently, the grammar $G_d$ provided to GRL for this task is defined as follows.
 \begin{align}\label{eq : g3 edge dir}
E &\rightarrow (E\odot M) \ | \ (NE) \ | \ (EN) \ | \ \transpose E \ | \ A \  | \ \J \\
N &\rightarrow (N \odot M) \ | \ (N\odot N) \ | \ \identite \nonumber\\
M &\rightarrow (MM) \ | \ (EE) \ | \ \transpose M \nonumber
\end{align}

In addition to GRL, we explored search within the CFG using two alternative methods: MCTS without GramFormer, referred to as MC, and a completely random rule selection process, referred to as Rand. For both MC and Rand, we maintained the same rollout budget and maximum sentence length as those used for GRL. In the following, we denote the path matrix for directed graphs as $ P_l^d $.  

For $ l = 2 $, since the task can be resolved by a single short formula, GRL, MC, and Rand all successfully identified $ P_2^d $.

\begin{align*}
    P_2^d = \J\odot A^2.
\end{align*}

For $ l = 3 $, when GRL was initially applied to directed graphs, we aimed to identify a combination of four sentences to construct the formula. However, GRL determined that, similar to the case of undirected graphs, only two sentences were required. This resulted in the following simplified formula.
 
\begin{align*}
    P_3^d = \J\odot(A(\J\odot A^2)) - A\odot((A\odot \transpose A)\J).
\end{align*}

Both MC and Rand failed to converge when tasked with finding a combination of four sentences. However, after GRL revealed that only two sentences were necessary, we re-evaluated MC and Rand under this condition. In this case, both methods successfully converged and identified $P_3^d$. 

For $l = 4$, only GRL successfully discovered a solution.

\begin{align*}
    P_4^d &= \J\odot(A(\J\odot(A(\J\odot A^2)))) - \J\odot(A(A\odot((A\odot \transpose A)\J)))  \\  
    &\qquad- \J\odot((A\odot((A\odot \transpose A)\J))A) - A\odot((A\odot\transpose{(A^2)})\J) + 2*A\odot\transpose A \odot A^2.
\end{align*}

For $l = 5$, only GRL successfully discovered a solution.

\begin{align*}
    P_5^d &= \J\odot(A(\J\odot(A(\J\odot(A(\J\odot A^2)))))) - \J\odot(A(\J\odot(A(A\odot((A\odot \transpose A)\J)))))  \\  
    &\qquad- \J\odot(A(\J\odot((A\odot((A\odot \transpose A)\J))A))) - \J\odot(A(A\odot((A\odot\transpose{(A^2)})\J))) \\
    &\qquad - (\J\odot A^2)\odot((\transpose A \odot A^2)\J) - A \odot ((A\odot\transpose{(A(\J\odot A^2))})\J) \\ 
    &\qquad  -((A\odot \transpose A)\J)\odot(\J\odot(A(\J\odot A^2))) - A\odot((A\odot \transpose A)\J) -A\odot\transpose A\odot((A\odot \transpose A)\J) \\
    &\qquad  -\transpose A \odot A^2 - 3A\odot (A\odot \transpose A)^2 +2A \odot A^2 \odot \transpose +A\odot \transpose A \odot (A(\J\odot A^2)) {(A^2)}\\
    &\qquad  + \J\odot ((A\odot \transpose{(A^2)})A)  + 2 \J\odot ((A\odot \transpose A \odot (A^2))A)\\
    &\qquad + \J\odot((A\odot\transpose A)(\J\odot A^2))  +((A\odot \transpose A)\J)\odot(A\odot((A\odot \transpose A)\J))\\ 
    &\qquad +A\odot(((A\odot\transpose A)\odot((A\odot \transpose A)\J))\J)+ 2*\J\odot(A(A\odot\transpose A \odot A^2)).
\end{align*}

To the best of our knowledge, the matrix formulae derived for counting paths of lengths 2 to 5 are novel. This highlights the capability of GRL to generate new and meaningful formulae. Furthermore, when assuming that the adjacency matrix $A$ is symmetric, the formulae discovered by GRL for directed graphs align perfectly with those identified for undirected graphs. This alignment leads us to conjecture that GRL has identified optimal formulae, at least in the undirected case.

\section{Algorithms}\label{app:algo}

This section provides the pseudo code of algorithm of section \ref{sec: 2} through \ref{sec: 4}.

\begin{algorithm}
\caption{PDA sentence generation for a given grammar $G$.}\label{alg:pda}
\inp{The PDA $(Q,\Sigma,\Gamma,\delta,q_0,Z,F)$ derived from a given CFG $G$.}
\out{A sentence $w \in L(G)$ where each element of $w$ is in $\Sigma$.}
\init{
    
    $S \gets [Z]$ \# Initialisation of the stack\\
    $w \gets [ \ ]$ \# Initialisation of the written sentence
    }
\While{$S \neq [ \ ]$}{
    $c \gets pop(S)$ \\
    \eIf{$c \in \Sigma$}{
        Append $c$ to $w$ \# Write the terminal symbol $c$
        }{
        Choose $t \in \delta(q_0,\varepsilon,c)$ \# Choose a transposition for the variable $c$ \\
        $push(S,t)$ \# Concatenate the rule $t$ to the stack 
        }
    }
\ret{$w$ }
\end{algorithm}

\begin{algorithm}
\caption{Definition of the algorithm $read$, that return the first variable token in input $I$ and its position if $I$ contains variable token}\label{alg:read}
\inp{The set of token $T := \{T_v,T_r,T_t\}$ derived from a given CFG $G$ \\
    The Input $I$, concatenation of $w \in \Sigma^*$ the written terminal symbols with the stack $S \in \Gamma^*$ }
\out{$b$ a boolean that indicate whether $I$ contains elements of the variables token set $T_v$ \\
    $v \in T_v$ a variable token, None if $b$ is False \\
    $pos$ the position of $v$ in $I$, None if $b$ is False}
\init{
    $b \gets False$
    $v \gets None$
    $pos \gets None$
    }
\For{$c,i \in enumerate(I)$}{
    \If{$c \in T_v$}{
        $b \gets True$ \\
        $v \gets c$ \\
        $pos \gets i$ \\
        Break
        }
    }
\ret{$b,v,pos$ }
\end{algorithm}

\begin{algorithm}
\caption{Prediction algorithm of the gramformer model $M=(encoder,decoder)$.}\label{alg:transfpred}
\inp{A variable token $v \in T_v$ of the set of token $T := \{T_v,T_r,T_t\}$ derived from a given CFG $G$ \\
    A variable mask $M_v$ corresponding to the variable token $v$ \\
    The Input $I$, concatenation of $w \in \Sigma^*$ the written terminal symbols with the stack $S \in \Gamma^*$ }
\out{$policy$ the learned distribution of possible rules selection form the variable corresponding to $v$ \\
    optional:$value$ the learned empirical value}
$memory \gets encoder(v)$ \\ 
$latent \gets decoder(memory,I)$ \\
$value \gets MLP(latent,1)$  \# Optional\\
$pol \gets MLP(latent,nb token)$ \\
$policy \gets softmax(pol + M_v)$ \# probability distribution of the possible transposition of token  variable $v$ \\
\ret{$policy$,($value$ \# Optional) }
\end{algorithm}

\begin{algorithm}
\caption{$Replace$ the variable of $I$ at position $pos$ by the list of variable and/or terminal tokens corresponding to the rule token at indices $decision$}\label{alg:replace}
\inp{The Input $I$, concatenation of $w \in \Sigma^*$ the written terminal symbols with the stack $S \in \Gamma^*$ \\
The position of the variable token to replace $pos$ \\
The indice $decision$ of the selected rule token 
}
\out{A new $I$ where, the variable at position $I$ as been replaced by the list of variable and/or terminal tokens corresponding to the rule token at indices $decision$ }
    $sb \gets I[:pos]$ \\
    $sf \gets I[pos+1:]$ \\
    $si \gets tokens(decision)$ \\
\ret{$concatenate(sb,si,sf)$ }
\end{algorithm}

\begin{algorithm}
\caption{Gramformer sentence generation for a given grammar $G$.}\label{alg:transfogener}
\inp{The transformer model $M$, the set of token $T:= \{T_v,T_r,T_t\}$ and the dictionary of variable mask $M_v$ derived from a given CFG $G$.}
\out{A sentence $w \in L(G)$ where each element of $w$ is in $\Sigma$.}
\init{
    
    $I \gets [Z]$ \# Initialisation of the input\\
    $b,v,pos \gets read(I)$ \# read the input 
    }
\While{$b$}{
    $policy \gets M(v,M_v[v],I)$ \# Distribution proposed by the transformer model \\
    $decision \gets argmax(policy)$ \\
    $I \gets replace(I,pos,decision)$ \\
    $b,v,pos \gets read(I)$
    }
\ret{$I$ }
\end{algorithm}

\begin{algorithm}
\caption{GRL algorithm one agent acting phase}\label{alg:GRL}
\inp{A $MCTS$ defined to search within a CFG $G$ \\
    $nbwords$, the number of sentences to generate \\
    A fixed Gramformer $M$ \\
    A $buffer$}
\out{A $buffer$ that contains empirical policy and value of the tree explored during MCTS for the selected nodes of this MCTS.}
\init{
    
    $I \gets [Z]*nbwords$ \# Initialisation of the input\\
    $b,v,pos \gets read(I)$ \# read the input \\
    $tree \gets init MCTS(nbwords)$ \\
    $buffer \gets [ I ]$
    }
\While{$b$}{
    $root \gets tree(I)$ \\
    $decision,tree \gets MCTS(root,M)$ \\
    $I \gets replace(I,pos,decision)$ \\
    $b,v,pos \gets read(I)$ \\
    Append $I$ to $buffer$
    }
\ret{fill($buffer,tree$) }
\end{algorithm}

\begin{algorithm}
\caption{MCTS algorithm}\label{alg:MCTS}
\inp{The gramformer model $M$, and $root$, a node of the tree.}
\out{A $child$ of the root node guided by the MCTS heuristic.}
\For{i=1 to N (Number of simulations) }{
    $leaf \gets selection(root,M)$ \\
    $child \gets Expansion(leaf)$ \\
    $reward \gets simulation(child)$ \\
    $Backpropagation(child,reward)$
    }
\ret{$Bestchild(root,M)$ }
\end{algorithm}

\begin{algorithm}
\caption{MCTS selection step}\label{alg:selection}
\inp{The gramformer model $M$ and a $node$ of the tree}
\out{A descendant node of the input $node$ guided by the MCTS heuristic.}
\While{$node$ is fully expanded and not leaf }{
    $node \gets Bestchild(node,M)$
    }
\ret{$node$ }
\end{algorithm}

\begin{algorithm}
\caption{MCTS expansion step}\label{alg:expansion}
\inp{a $node$ of the tree}
\out{A descendant node of the input $node$ or the input $node$.}
\If{$node$ is not leaf and not fully expanded}{
    $node \gets RandomUnvisitedChild(node)$
    }
\ret{$node$ }
\end{algorithm}

\begin{algorithm}
\caption{MCTS simulation step}\label{alg:simulation}
\inp{a $node$ of the tree}
\out{A reward.}
\While{$node$ is not leaf}{
    $node \gets RandomAction(node)$
    }
\ret{$Evaluate(node)$ }
\end{algorithm}

\begin{algorithm}
\caption{MCTS backpropagation step}\label{alg:backprop}
\inp{a $node$ of the tree, a $reward$.}
\While{$node$ is not $root$}{
    Add 1 to the visit count of $node$ \\
    Add $reward$ to the reward count of the $node$ \\
    $node \gets$ parent of $node$
    }
\end{algorithm}

\begin{algorithm}
\caption{MCTS bestchild algorithm}\label{alg:bestchild}
\inp{The gramformer model $M$ and a $node$ of the tree}
\out{The best child of $node$.}
$policy,- \gets M(node)$ \\
$N \gets$ visits count of  $node$ \\
$res \gets \{ \ \}$ \\
\For{$child$ of $node$}{
    $- ,reward \gets M(child)$ \\
    $N_c \gets$ visits count of  $child$ \\
    $v \gets M(child)$ \\
    $Q \gets$ value of $child$ / visits count of  $child$ \\
    $res[child] \gets \alpha Q + (1-\alpha)v + c \times policy(child) \times \frac{\sqrt{\sum_c N_c}}{1 + N}$
    }

\ret{$argmax(res)$ }
\end{algorithm}

\begin{algorithm}
\caption{GRL learning phase algorithm}\label{alg:GRLlearning}
\inp{The gramformer model $M$ and a $buffer$ containing policy and value for given nodes of former trees.}
\out{The updated gramformer $M$}
$optimiser \gets ADAM(M,lr)$ \\
\For{i=1 to N (number of epoch)}{
\For{$node$ in $buffer$}{
    $policy ,reward \gets M(node)$ \\
    $policy Loss \gets KLLoss(policy,$policy of $node)$ \\
    $value Loss \gets HubertLoss(reward,$value of $node)$ \\
    $Loss \gets policy Loss + value Loss$ \\
    $backward(Loss)$ \\ 
    step of $optimser$ \\
    
}}
\ret{$M$}
\end{algorithm}

\section{GRL parameter for the path counting problem}\label{app:param notation}

To ensure reproducibility of our results, we provide detailed specifications of the hyperparameters used in our experiments:

\subsubsection*{MCTS Parameters}
\begin{itemize}
    \item \textbf{Exploration Parameter ($c$):} The exploration-exploitation trade-off parameter in the UCT formula is set to $c=10$.
    \item \textbf{Rollouts per Node:} Each node is evaluated using 10000 rollouts, with a maximum trajectory length of 20 steps.
\end{itemize}

\subsubsection*{Transformer Architecture (GramFormer)}
\begin{itemize}
    \item \textbf{Number of Layers:} 4 transformer layers.
    \item \textbf{Hidden Dimension:} 256.
    \item \textbf{Number of Attention Heads:} 8.
    \item \textbf{Feedforward Dimension:} 512.
    \item \textbf{Positional Encoding:} Learned positional embeddings for sentences up to 100 tokens for each sentences.
\end{itemize}

\subsubsection*{Optimizer and Training Parameters}
\begin{itemize}
    \item \textbf{Optimizer:} Adam optimizer with $\beta_1=0.9$, $\beta_2=0.999$, and $\epsilon=10^{-8}$.
    \item \textbf{Learning Rate:} A learning rate of $10^{-4}$ was used.
    \item \textbf{Batch Size:} 128 sentences per batch.
\end{itemize}

\paragraph{Resource consumption}

GRL, like other MCTS based algorithm, suffers from significant resource consumption. For the $5$-path and $6$-path counting problems, it took weeks to converge. Even with substantial CPU resources (124 AMD EPYC 9654) for the acting and GPU (4 NVIDIA A100) resources for the learning, the process remains extremely time-consuming.

These hyperparameter choices are consistent across all experiments unless stated otherwise. Further details specific to individual tasks and datasets are provided in the respective sections of the Appendix.

\end{document}